\documentclass{article}

\usepackage{microtype}
\usepackage{graphicx}
\usepackage{subcaption}
\usepackage{booktabs} 

\usepackage{hyperref}

\usepackage[preprint]{icml2026}

\usepackage{amsmath}
\usepackage{amssymb}
\usepackage{mathtools}
\usepackage{amsthm}
\usepackage{mathmacros}
\usepackage{xspace}
\usepackage{xcolor}
\usepackage{thmtools, thm-restate}

\usepackage[capitalize,noabbrev]{cleveref}

\theoremstyle{plain}

\theoremstyle{definition}

\theoremstyle{remark}

\usepackage[textsize=tiny]{todonotes}

\icmltitlerunning{High-D theory of two-phase optimizers}

\newif\ifcomments
\commentsfalse 

\ifcomments
\newcommand{\TODO}[1]{{\color{magenta}[TODO] #1}}
\newcommand{\aga}[1]{{\color{teal}[AA] #1}}
\newcommand{\krnote}[1]{{\color{purple}[KR] #1}}
\else
\newcommand{\TODO}[1]{}
\newcommand{\aga}[1]{}
\newcommand{\krnote}[1]{}
\fi

\newcommand{\diloco}{DiLoCo\xspace}
\newcommand{\la}{LA\xspace}
\newcommand{\sla}{SLA\xspace}
\newcommand{\ladiloco}{\la-\diloco}

\newcommand{\y}{\m{y}}
\newcommand{\z}{\m{z}}
\newcommand{\f}{\m{f}}
\newcommand{\zz}{z}

\newcommand{\J}{\m{J}}
\newcommand{\G}{\m{G}}

\newcommand{\dl}{\delta}

\renewcommand{\th}{\sm{\theta}}
\newcommand{\psb}{\sm{\psi}}
\newcommand{\Lo}{\mathcal{L}}
\renewcommand{\P}{P}
\newcommand{\D}{D}
\newcommand{\lr}{\eta}

\renewcommand{\H}{\m{H}}
\newcommand{\g}{\m{g}}

\newcommand{\V}{\m{V}}

\newcommand{\ntk}{\hat{\Theta}}

\newcommand{\B}{B}

\newcommand{\Id}{\m{I}}

\newcommand{\tl}{\tilde}

\newcommand{\tz}{\tl{\z}}

\newcommand{\TT}{S}

\newcommand{\xx}{x}
\newcommand{\R}{R}
\newcommand{\lam}{\lambda}

\newcommand{\diag}{{\rm diag}}

\newcommand{\pmat}{\m{P}}

\newcommand{\covm}{\sm{\Sigma}}
\newcommand{\pvec}{\m{p}}
\newcommand{\lmat}{\sm{\Lambda}}

\newcommand{\T}{\m{T}}

\newcommand{\bmat}{\m{B}}
\newcommand{\mmat}{\m{M}}

\newcommand{\amat}{\m{A}}

\newcommand{\lafrac}{\nu}
\newcommand{\mum}{\sm{\mu}}
\newcommand{\kvec}{\m{k}}
\newcommand{\tk}{\tl{\kvec}}
\newcommand{\kk}{k}

\newcommand{\bone}{\beta_{1}}

\newcommand{\zetvec}{\sm{\zeta}}
\newcommand{\om}{\omega}
\newcommand{\bin}{\beta_{\rm in}}
\newcommand{\bout}{\beta_{\rm out}}
\newcommand{\Tin}{\T_{\rm in}}
\newcommand{\Tout}{\T_{\rm out}}
\newcommand{\Tsyn}{\T_{\rm sync}}
\newcommand{\Ttot}{\T_{\rm tot}}

\newcommand{\mF}{\m{F}}
\newcommand{\mG}{\m{G}}
\newcommand{\mH}{\m{H}}
\newcommand{\FF}{F}

\newcommand{\tlam}{\tl{\lam}}

\newcommand{\convr}{r}
\newcommand{\mux}{\mu_{x}}
\newcommand{\muy}{\mu_{y}}
\newcommand{\yy}{y}
\newcommand{\dd}{d}
\newcommand{\hkk}{\hat{\kk}}
\newcommand{\zema}{\hat{\zz}}
\newcommand{\bema}{\beta_{\rm ema}}

\begin{document}

\twocolumn[
  \icmltitle{High dimensional theory of two-phase optimizers\aga{comments on}}



  \icmlsetsymbol{equal}{*}

  \begin{icmlauthorlist}
    \icmlauthor{Atish Agarwala}{gdm}
  \end{icmlauthorlist}

  \icmlaffiliation{gdm}{Google DeepMind}
  
  \icmlcorrespondingauthor{Atish Agarwala}{thetish@google.com}

  \icmlkeywords{Machine learning, optimization}

  \vskip 0.3in
]



\printAffiliationsAndNotice{}  

\begin{abstract}
The trend towards larger training setups has brought a renewed interest in partially asynchronous two-phase optimizers which optimize locally and then synchronize across workers. Additionally, recent work suggests that the one-worker version of one of these algorithms, \diloco, shows promising results as a (synchronous) optimizer. Motivated by these studies we present an analysis of \ladiloco, a simple member of the \diloco family, on a high-dimensional linear regression problem. We show that the one-worker variant, \la, provides a different tradeoff between signal and noise than SGD, which is beneficial in many scenarios. We also show that the multi-worker version generates more noise than the single worker version, but that this additional noise generation can be ameliorated by appropriate choice of hyperparameters. We conclude with an analysis of \sla -- \la with momentum -- and show that stacking two momentum operators gives an opportunity for acceleration via a non-linear transformation of the ``effective'' Hessian spectrum, which is maximized for Nesterov momentum. Altogether our results show that two-phase optimizers represent a fruitful new paradigm for understanding and improving training algorithms.
\end{abstract}

\section{Introduction}

The success of deep learning has brought a pressing need for training algorithms that operate at scale, due to the desire to train larger models with larger and more complex datasets for longer
time. Most progress in this area has been made by synchronous systems, which compute gradients across multiple machines but sync them before changing parameters.
However, the cost of operating synchronous systems can grow superlinearly with scale due to
issues like memory, communications bandwidth, and hardware failures.

This has lead to numerous attempts to develop asynchronous training systems. While fully asynchronous training remains challenging,
an intermediate approach is to design distributed optimizers where
groups of workers maintain in-group synchrony, but between group synchronization takes a longer time \cite{mcmahan2017communication, charles2021convergence}.
This allows for more flexible hardware configurations, and the possibility for more compute to be brought to bear on a single training run.

Recent work
suggests that a two-phase optimization framework is a promising approach \citep{reddi2021adaptive, malinovskiy2020local,charles2021convergence,charles22iterated,khaled2025understanding}. Workers first perform an \emph{inner optimization} using
algorithms (e.g. SGD), and generate a \emph{pseudo-gradient} --- the difference between the parameters at the start and end of the inner optimization. These pseudo-gradients are then
averaged and sent to an \emph{outer optimizer} (e.g. SGD momentum) and used to perform a single optimization step --- the results of which are sent to the workers as the starting
point for the new inner optimization.

\citet{douillard2023diloco} show that one particular configuration of this two-phase framework, \diloco, exhibits promising results in distributed LLM training, a fact which has since been repeatedly observed empirically~\citep{jaghouar2024opendilocoopensourceframeworkglobally,douillard2025streaming,charles2025communication}, but the theoretical underpinnings of this performance remain mysterious. Even more surprising is the fact that \diloco shows benefits in the one-worker
(synchronous) setting \citep{charles2025communication}. There is even less theoretical understanding of this phenomenon, with some conjectures that the benefits are due to
implicit regularization of the algorithms \citep{kallusky2025snoo}.

In this work we provide theoretical evidence that two-phase algorithms like \diloco
provide \emph{direct benefits to optimization itself}. We use tools from high-dimensional learning dynamics
theory to exactly analyze the most basic algorithm in this class, Lookahead (\la) \citep{zhang2019lookahead},
as well as the multi-worker version \ladiloco (with outer optimizer SGD),
and write an evolution equation for the expected value of the loss, which concentrates in high dimensions.
We show the following:
\begin{itemize}
\item The combination of two optimizers induces a noise structure that is quantitatively
different from SGD.
\item \la allows a different tradeoff between signal and noise than SGD, and in some scenarios is provably better than SGD at optimal learning rates.
\item \ladiloco generates more noise in the distributed setting than the synchronous setting,
but there are hyperparameter choices which minimize the discrepancy between the methods
for small number of workers.
\end{itemize}
We then provide analysis of the extension Super Lookahead (\sla) --- \la with momentum in the
inner and outer optimizer. In the deterministic setting, we prove some key features of
stacking momentum operators:
\begin{itemize}
\item The outer momentum operates on an ``effective spectrum'' generated by the inner optimizer, non-linearly transforming the relationship between Hessian eigenvalues and
convergence rate.
\item Nesterov-style momentum has large benefits over heavy ball momentum in this setting.
\end{itemize}
Our analysis suggests that there is a rich family of behaviors induced by \la and \sla which
can't be captured by more traditional algorithms and is worth further experimental and theoretical study.

\section{Basics of high-dimensional linear regression}

We introduce the high-dimensional linear regression setup studied previously~\citep{lee2022trajectory, agarwala2024high}.
We will present known results for the dynamics of SGD in this setting, so we can build on them
to analyze the \la optimizer.

Consider a scenario with $\D$ targets $\y_{tr}$ that we are trying to fit with a linear
model with a $\P$ dimensional parameter vector $\th$
given by
\begin{equation}
f(\th) = \y_{0}+\J\th
\end{equation}
Here $\J$ is the $\D\times\P$ dimensional Jacobian. We consider minimization of the MSE objective
\begin{equation}
\Lo(\th) = \frac{1}{2\D}||\z||^{2},~\z\equiv \f(\th)-\y_{tr}.
\end{equation}
where we have defined the residuals $\z$ for convenience.
Previous works studied mini-batch SGD, where $\B$ examples are sampled from the $\D$ total
datapoints in order to compute the gradient. The update rule for SGD in this setting
is given by
\begin{equation}
\th_{t+1}-\th_{t} = -\frac{\lr}{\B} \J^{\tpose}\pmat_{t}\z_{t}.
\end{equation}
where $\pmat_{t}$ is a $\D\times\D$ projection matrix, diagonal with $\B$ entries with value
$1$, and the rest $0$, chosen at random. The $\pmat_{t}$ are i.i.d. for different $t$.
One feature of this model is that the updates close in function space. The residuals evolve as
\begin{equation}
\z_{t+1}-\z_{t}  = -\lr\frac{\D}{\B}\ntk\pmat_{t}\z_{t}.
\label{eq:lin_reg_sgd}
\end{equation}
where we define the empirical NTK
$\ntk\equiv \frac{1}{\D}\J\J^{\tpose}$.

This means that it is possible to write the exact evolution of the moments of $\z$. For
example, the first moment is given by
\begin{equation}
\begin{split}
\expect_{\pmat}[\z_{t+1}-\z_{t}|\z_{t}, \J_{t}] & = -\lr\ntk\z_{t}\\
~\expect_{\pmat}[\z_{t+s}] & =\left(\Id-\lr\ntk\right)^{s}\z_{t},
\end{split}
\label{eq:z_ave_quad}
\end{equation}
which is equivalent to the evolution at full batch size.

The dynamics of the second moment can also be computed exactly
(see Equation 7 and Appendix A.2 of \citet{agarwala2024high}). Since we are training on MSE loss,
computing the diagonal of the covariance $\expect_{\pmat}[\z_{t+1}\z_{t+1}^{\tpose}]$
would give us the expected loss trajectory. In the high dimensional limit
($\B\gg 1$, $\D\gg1$, $\B/\D = O(1)$), \citet{lee2022trajectory} showed that under appropriate
conditions on $\ntk$, the dynamics of the
second moment simplify considerably and the loss trajectories concentrate. We recap the concentration results in the notation of \citet{agarwala2024high}, to ease later exposition.
Given the
eigendecomposition $\ntk = \V\lmat\V^{\tpose}$, where $\V$ is orthogonal and $\lmat$ is diagonal and PSD,
we define the normalized diagonal $\pvec_{t} \equiv \lmat^{+}\diag(\V^{\tpose}\expect_{\pmat}[\z_{t}\z_{t}^{\tpose}]\V)$. In the limit, $\pvec$ evolves as

\begin{equation}
\begin{split}
\pvec_{t+1} & = (\amat+\bmat)\pvec_{t},~ \amat  \equiv(\Id-\lr\lmat)^2,\\
& ~\bmat\equiv \left(\frac{1}{\B}-\frac{1}{D}\right)\lr^2\lmat\m{1}\m{1}^{\tpose}\lmat.
\end{split}
\label{eq:pvec_high}
\end{equation}
The $\amat$ term corresponds to the deterministic dynamics of full batch training, and
the $\bmat$ term controls the noise, which is larger for small batch size
$\B$.

This system of equations can be used to derive quantities like the stochastic edge of stability --- the critical learning rate above which dynamics diverges due to noise
accumulation. Note that the loss is given simply by $\Lo(\th) = \frac{1}{\D}\m{1}^{\tpose}\lmat\pvec$.

\section{Stochastic dynamics of \ladiloco}

\aga{Add full derivations in appendix if needed.}

We use the model and analysis technique described in the previous section and extend them to
understand \ladiloco. We first study \la, the one-worker case of \ladiloco, and prove that in this
setting the two-tier dynamics of \la can reduce the loss faster than SGD with optimal learning rate. We then study the multi-worker case, and analyze how the number of workers
reshapes the dynamics of the noise.

\subsection{Lookahead in the high-dimensional regime}

We first analyze the lookahead (\la) algorithm, first introduced in \citet{zhang2019lookahead}. This is one of the
most basic training algorithms compatible with the \diloco framework,
but has only one worker.
Given a sync time $\TT$, the update rule is given by:
\begin{equation}
\psb_{t} := \th_{t}{\rm~(when~} t//\TT=0)
\end{equation}
\begin{equation}
\psb_{t+1}-\psb_{t} = -\lr\g_{t}(\psb_{t}){\rm~for~}\TT{\rm~steps}
\end{equation}
\begin{equation}
\th_{t+\TT}-\th_{t} = \lafrac(\psb_{t+\TT}-\th_{t})
\end{equation}
where $\g_{t}$ is the minibatch gradient.
In other words, there is an inner optimizer that applies SGD to a copy
$\psb$ of $\th_{t}$ for $\TT$ steps
with \emph{inner learning rate} $\lr$,
after which an outer optimizer applies SGD based on the difference $\psb_{t+\TT-1}-\th_{t}$ with \emph{outer learning rate} $\lafrac$. For $\lafrac = 1$ this corresponds to regular SGD,
and for $\lafrac = 0$ this corresponds to no change to the parameters at all.

In the high dimensional linear model, the gradient is $ \J^{\tpose}\pmat_{t}\z_{t}(\psb_{t})/\B$ and the algorithm can again be written entirely in function space:
\begin{equation}
\tz_{t} := \z_{t}{\rm~(when~} t//\TT=0)
\end{equation}
\begin{equation}
\tz_{t+1}-\tz_{t} = -\lr\frac{\D}{\B}\ntk\pmat_{t}\tz_{t}{\rm~for~}\TT{\rm~steps}
\end{equation}
\begin{equation}
\z_{t+\TT}-\z_{t} = \lafrac(\tz_{t+\TT}-\z_{t})
\end{equation}
with the definitions
\begin{equation}
\z_{t}\equiv f(\th_{t})-\y_{tr},~\tz_{t}\equiv f(\psb_{t})-\y_{tr}
\end{equation}

This function space picture allows a direct extension of Equation \ref{eq:pvec_high} to \la.
Computing the second moments of $\z_{t}$ after a single cycle of $\TT$ steps, we have:
\begin{equation}
\begin{split}
\expect[\z_{t+\TT}\z_{t+\TT}^{\tpose}|\z_{t}\z_{t}^{\tpose}] & =  \expect\left[\left[\lafrac\tz_{t+\TT}+(1-\lafrac)\z_{t}\right]\right.\\
&\left.\left[\lafrac\tz_{t+\TT}+(1-\lafrac)\z_{t}\right]^{\tpose}|\z_{t}\right]
\end{split}
\end{equation}
Here the expectation is taken over the sequence of sampling matrices $\{\pmat_{t},\pmat_{t+1},\cdots \pmat_{t+\TT-1}\}$.

The $\tz_{t+\TT}$ are the result of $\TT$ steps of SGD, which means the terms can be computed in terms of the moments for SGD:
\begin{equation}
\expect[\tz_{t+\TT}\z_{t}^{\tpose}|\z_{t}\z_{t}^{\tpose}] = \mum_{t,\TT}\z_{t}^{\tpose},~\mum_{t,s}\equiv \expect_{\rm SGD}[\z_{t+s}|\z_{t}]
\end{equation}
\begin{equation}
\expect_{\pmat}[\tz_{t+\TT}\tz_{t+\TT}^{\tpose}|\z_{t}] = \expect_{\rm SGD}[\z_{t+\TT}\z_{t+\TT}^{\tpose}|\z_{t}]
\end{equation}
Using these equations, we can compute the dynamics of the second moment eigenmodes in the high-dimensional limit of \citet{lee2022trajectory} (Appendix \ref{app:pvec_derivation}):
\begin{equation}
\begin{split}
\pvec_{t+\TT} & = \left([(1-\lafrac)\Id+\lafrac(\Id-\lr\lmat)^{\TT}]^{2}+\right.\\
& \left.\lafrac^{2}[\T_{\TT}-(\Id-\lr\lmat)^{2\TT}]\right)\pvec_{t},~\T_{k}\equiv(\amat+\bmat)^{k}
\end{split}
\label{eq:look_ahead_pvec}
\end{equation}
The first term corresponds to the the deterministic dynamics of \la. The second term
can then be interpreted as the dynamical contribution of noise after $\TT$ steps of SGD,
and generally slows convergence rate relative to the deterministic setting.

We immediately see that \la allows a different path for trading off progress on the objective from the deterministic terms and the generation of noise.
In SGD, increasing $\lr$ speeds up the convergence encoded by the $\amat$ matrix but also
increases the noise encoded by the $\bmat$ matrix, and an optimal choice of $\lr$ trades
these off efficiently. In \la we have the additional option to tune the outer learning rate $\lafrac$
with similar consequences; increasing it improves deterministic
progress but also increases noise. This gives us a two-dimensional space of tradeoffs between deterministic progress and noise generation. Increase in noise from a larger
inner learning rate $\lr$ can be offset by a smaller outer learning rate $\lafrac$ and vice
versa.

For small learning rates these effects are redundant; if $||\bmat||_{\rm op}\ll||\amat||_{\rm op}$ and $\lr\lam\TT\ll 1$, then configurations with
a fixed product $\lafrac\lr$ will give near-identical dynamics.
For larger learning rates, such as those associated with optimal training, this is not the
case. For example $\lafrac>1$ decreases stability non-linearly
(Appendix \ref{app:det_dynamics}). However, as we will show in the next section, nontrivial $\lafrac$ can bring net benefits to optimization.

\subsection{Lookahead can optimize faster than SGD}

An immediate question is: are there scenarios where nontrivial \la ($\lafrac\neq 1$) optimizes faster than SGD?
More concretely, we consider the following scenario. Consider an initialization such that $\pvec_{0} = \lmat^{+}\m{1}$ (corresponding to i.i.d. initialization of $\z_{0}$). Consider
the loss $\Lo(\lafrac, \lr, \TT)$ after $\TT$ steps of \la from this initialization
(that is, $\Lo(\lafrac, \lr, \TT) = \frac{1}{\D}\m{1}^{\tpose}\lmat^{\tpose}\pvec_{\TT}$).
Then we can ask the question: are there spectra $\lmat$ such that $\Lo(\lafrac, \lr, \TT)<\Lo(1, \lr_{\rm SGD}, \TT)$ for some $\lafrac$ and $\lr$ for any $\lr_{\rm SGD}$?
More formally, we seek a choice of NTK spectrum where there exists
some $\lafrac\neq 1$ and $\lr$ such that
\begin{equation}
\Lo(\lafrac, \lr, \TT)<\argmin{\lr'} \Lo(1, \lr', \TT).
\end{equation}

Perhaps surprisingly, the answer is yes. We numerically iterated Equation \ref{eq:look_ahead_pvec} for a variety of spectra $\lmat$, batch fraction $\B/\D$,
and sync cycle period $\TT$. For each such setting, we ran $\TT$ steps of dynamics
over a grid of $\lr$ and $\lafrac$ values to calculate the change in loss after one cycle.
As expected, we generally found that the optimal value of $\lr$ for a particular $\lafrac$
decreased in $\lafrac$ (Figure \ref{fig:grad_spike_heatmap}, left).

We found that there were multiple settings for which the optimal $(\lafrac, \lr)$ pair
involved $\lafrac>1$. In general, variance in the spectrum causes the optimal $\lafrac$ to
shift away from $1$. For example, consider a ``spiked spectrum'' with two
eigenvalues $\lambda_{0}$ ($99\%$ of eigenmodes) and $20\lambda_{0}$ ($1\%$ of eigenmodes).
Plotting the loss after one cycle as a function of $\lafrac$ (optimizing $\lr$ for each $\lafrac$), we see that the optimal $\lafrac$ is greater than $1$ (Figure \ref{fig:grad_spike_heatmap}, middle). We found other settings where the optimal $\lafrac<1$,
such as certain power law spectra (Figure \ref{fig:grad_spike_heatmap}, right, $\alpha = -1.5$).
\aga{Add if necessary:
We present a variety of other settings with non-trivial optimal $\lafrac^*$ in Appendix write appendix.}

We note that the total benefit of \la amounts to a $10-20\%$ decrease in the loss per cycle in
the settings we present here. We conjecture that \la can provide modest but bounded gains in
more general settings. The largest benefits tend to occur for smaller batch fraction $\B/\D$ and longer cycles $\TT$.


\begin{figure*}[th]
\begin{tabular}{ccc}
\includegraphics[height=0.25\linewidth]{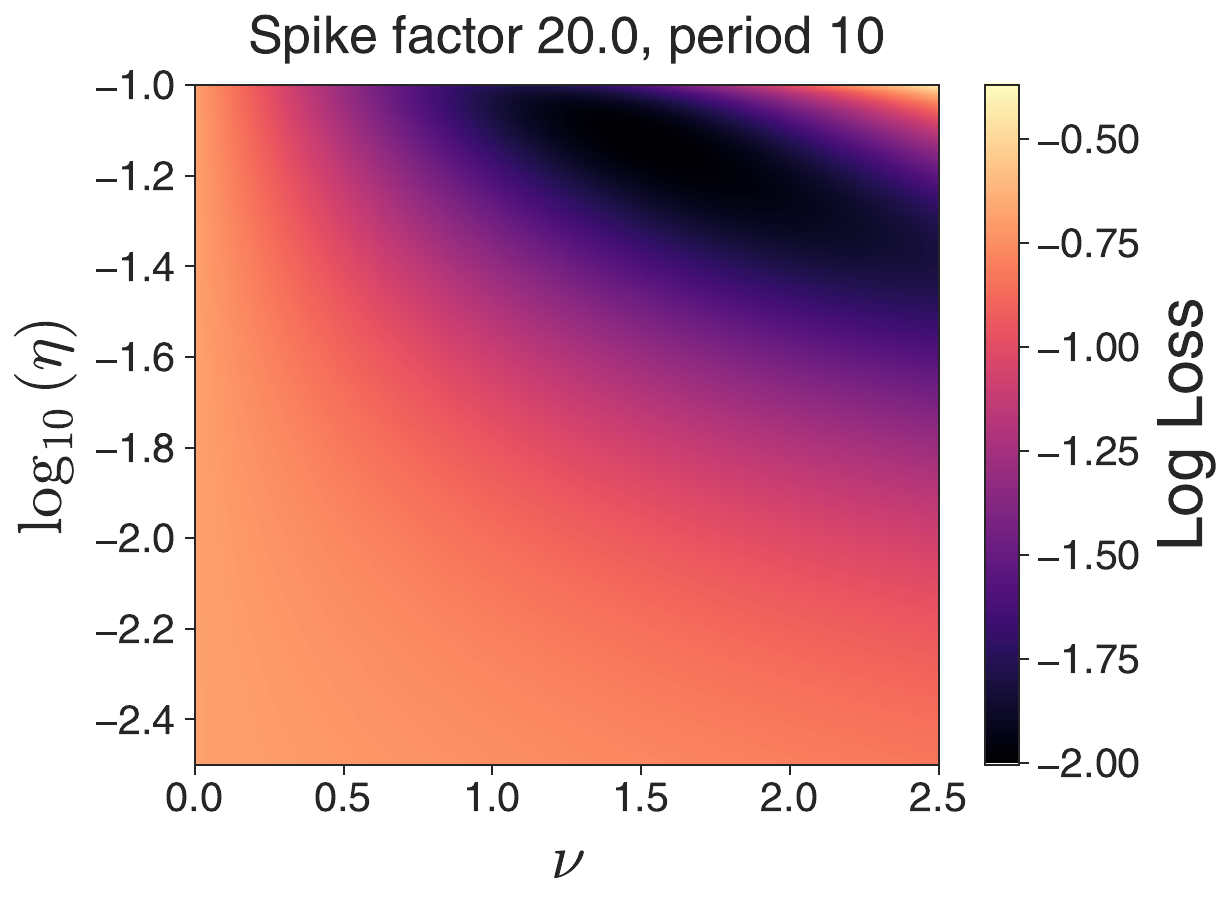} & \includegraphics[height=0.22\linewidth]{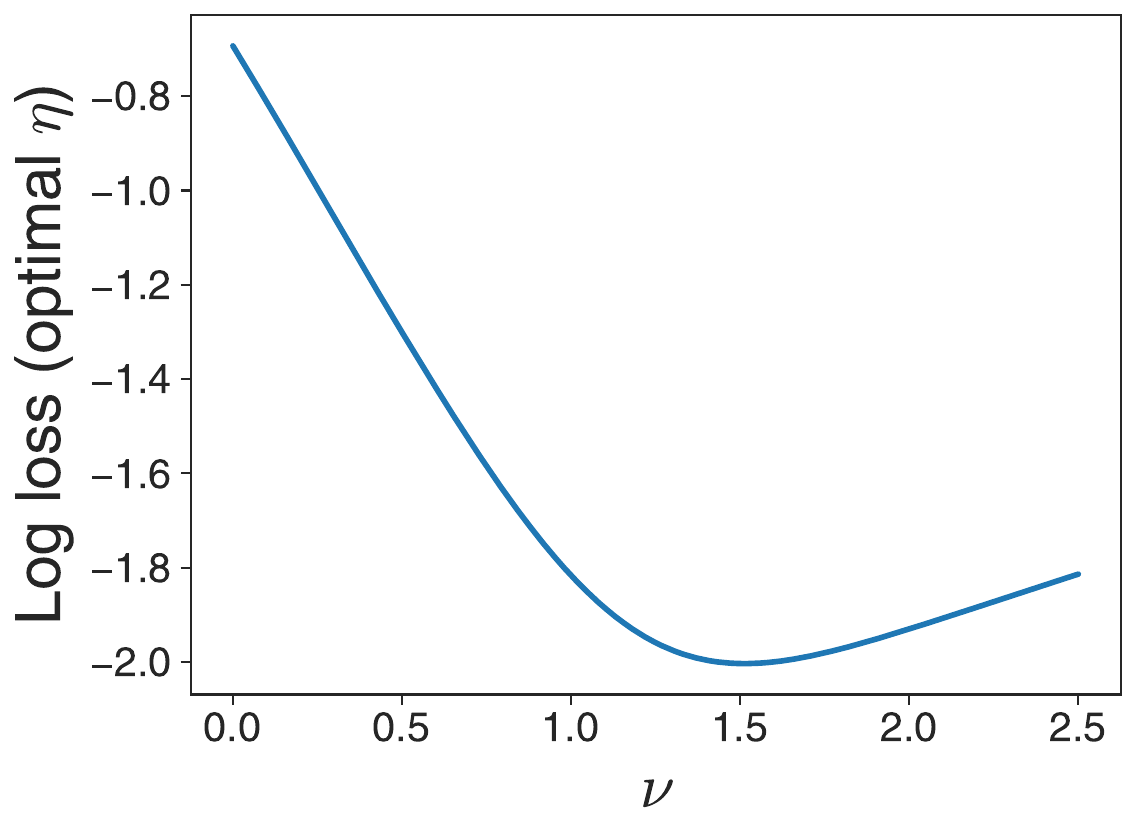} &
\includegraphics[height=0.22\linewidth]{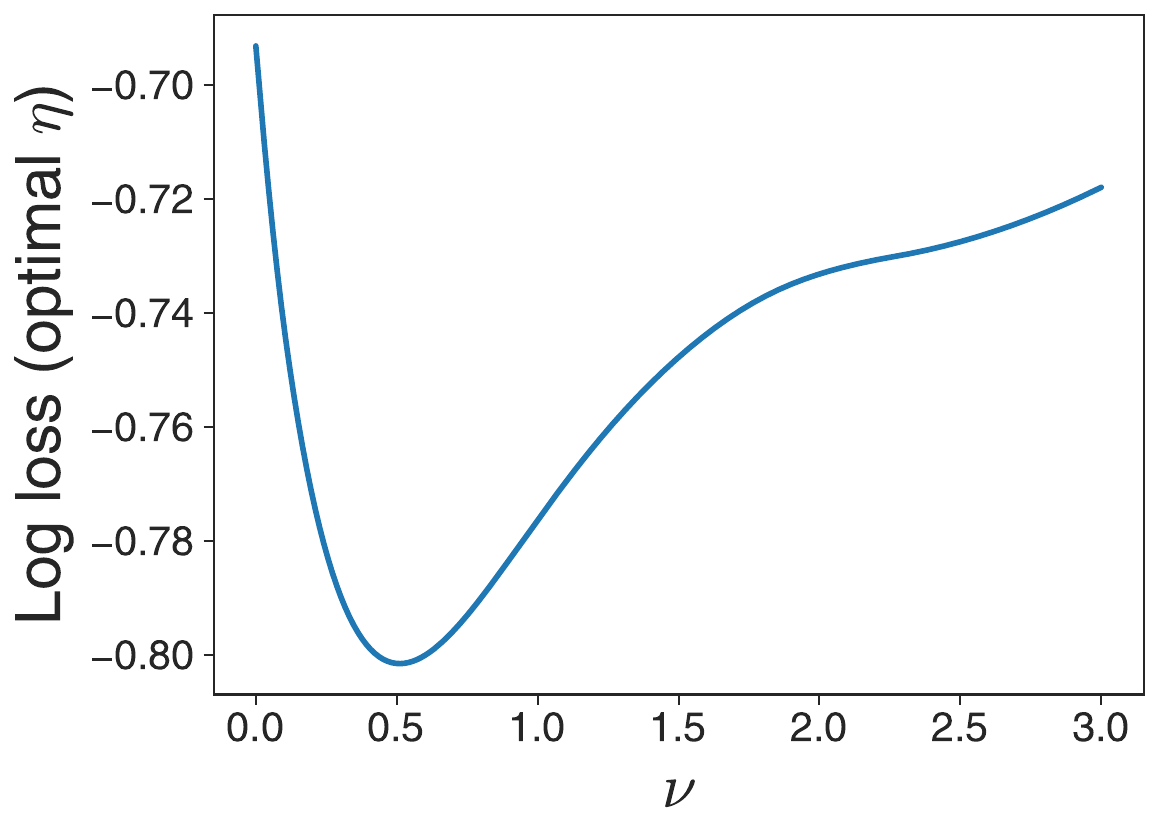}
\end{tabular}
\caption{Loss after one cycle of $\TT = 10$ steps as a function of $\lr$ and $\lafrac$
for spiked spectrum (left). Model has batch fraction $\B/\D = 0.2$, and spectrum has two
eigenvalues $\lambda_{0}$ ($99\%$ of eigenmodes) and $20\lambda_{0}$ ($1\%$ of eigenmodes). Optimal $\lr^*(\lafrac)$ decreases in $\lafrac$. Optimizing $\lr$ for each $\lafrac$ reveals that the optimal $\lafrac$ is greater than $1$ (middle). Power law spectrum with exponent $\alpha = -1.5$ has optimal $\lafrac<1$ (right, $\B/\D = 0.005$).}
\label{fig:grad_spike_heatmap}
\end{figure*}

\subsection{Dynamics of \ladiloco}

We can extend the analysis of \la to \ladiloco --- that is, \ladiloco with SGD as both the inner and outer optimizer. Given $\R$ workers indexed by $k$, the dynamics is given by:
\begin{equation}
\psb_{t;k} := \th_{t}{\rm~(when~} t//\TT=0)
\end{equation}
\begin{equation}
\psb_{t+1;k}-\psb_{t;k} = -\frac{\lr}{\B} \J^{\tpose}\pmat_{t;k}\z(\psb_{t;k}){\rm~for~}\TT{\rm~steps}
\end{equation}
\begin{equation}
\th_{t+T}-\th_{t} = \lafrac\left(\frac{1}{\R}\sum_{k=1}^{\R}\psb_{t+\TT;k}-\th_{t}\right)
\end{equation}
where the $\pmat_{t;k}$ are all i.i.d. distributed as the $\pmat$ before.
This is similar to the dynamics of \la, except there are $\R$ independent copies of the
parameters that are propagated for $\TT$ steps. The average of these parameters is then
propagated to the outer optimizer. Here $\B$ is the batch size per-worker, so the total batch
size across workers is $\R\B$.

In the linear regression setting, the dynamics can once again be written over function
space and we can derive the high dimensional dynamics of $\pvec$ (Appendix \ref{app:pvec_derivation}):
\begin{equation}
\begin{split}
\pvec_{t+\TT} & = \left([(1-\lafrac)\Id+\lafrac(\Id-\lr\lmat)^{\TT}]^{2}+\right.\\
& \left.\frac{\lafrac^{2}}{\R}[\T_{\TT}-(\Id-\lr\lmat)^{2\TT}]\right)\pvec_{t}
\end{split}
\label{eq:diloco_pvec}
\end{equation}
This is identical to the dynamics of \la, but with the noise reduced by $\R$. This suggests
that adding more workers with the same $\B$ is always beneficial to
training in this setting.

A more interesting and practically relevant scenario is one where the \emph{total batch size} remains fixed, while the
number of workers is varied. This allows, for example, a comparison between data-parallel
training and \ladiloco style training. If $\R>1$ is optimal here it would suggest that even
when synchronous training is no more costly than asynchronous training, it is beneficial to
decouple the parameters across workers.

To analyze this setting, it is useful to examine the structure of the noise term --- the only place where
\ladiloco departs from \la. 
It is informative to write the full noise term as a power series.
We note that $\T_{\TT}$ can be written as the sum of all possible $\TT$-words made of
products of $\amat$ and $\bmat$. We define the power series $\mmat_{s}$ by
\begin{equation}
\mmat_{s} \equiv \sum_{\m{W}\in \pi^{\TT}_{s}}\m{W}
\end{equation}
where $\pi^{\TT}_{s}$ is the set of products of length $\TT$ of $\amat$ and $\bmat$ with
exactly $s$ copies of $\amat$. We note that $\mmat_{0} = (\Id-\lr\lmat)^{2\TT}$.
A simple but tedious calculation shows us that for $s>0$, $\mmat_{s}$ can be written as
$\frac{\lr^{2s}}{\B^{s}}\tl{\mmat}_{s}(\lr)$, where $\tl{\mmat}_{s}$ is independent of $\B$
and $\R$. This means that we can write:
\begin{equation}
\frac{\lafrac^{2}}{\R}[\T_{\TT}-(\Id-\lr\lmat)^{2\TT}] = \sum_{s=1}^{\TT}a_{s}\tl{\mmat}_{s}(\lr),~a_{s}\equiv \frac{\lafrac^{2}\lr^{2s}}{\R\B^{s}}
\end{equation}


In terms of the total batch size $\B_{tot} = \B\R$, the coefficients are given by
\begin{equation}
a_{s} = \frac{\R^{s-1}}{\B_{tot}^{s}}\lafrac^{2}\lr^{2s}
\end{equation}
This shows that at the same \emph{total batch size}, using multiple workers $\R>1$ (\ladiloco) generates
\emph{more noise} than single worker \la at matching $\lr$ and $\lafrac$. This suggests that in this setting, using \ladiloco does not improve optimization --- unlike the benefit
gained from switching to \la from SGD. This can be formally proven for isotropic data:
\begin{restatable}{theorem}{slowloco}
Consider the dynamics of Equation \ref{eq:diloco_pvec} for fixed $\lafrac$ and $\lr$.
Suppose $\lmat$ has identical eigenvalues. Then, for fixed total batch size $\B_{\rm tot}\equiv \B\R$, the eigenvalues of the linear system strictly increase with $\R$.
\end{restatable}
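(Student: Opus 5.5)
The plan is to exploit the isotropy to diagonalize the linear map of Equation \ref{eq:diloco_pvec} explicitly, reducing the statement to a monotonicity claim about a scalar function of $\R$.

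\textbf{Step 1: diagonalize.} Set $\lmat=\lam\Id$ (on the $\D$ modes) with $\lam>0$. Then
\begin{equation*}
\amat=(1-\lr\lam)^{2}\Id\equiv a\,\Id,\qquad \bmat=\left(\tfrac{1}{\B}-\tfrac{1}{\D}\right)\lr^{2}\lam^{2}\,\m{1}\m{1}^{\tpose}.
\end{equation*}
Since $\amat$ is a multiple of the identity and $\bmat$ is rank one, the linear map of Equation \ref{eq:diloco_pvec} preserves the decomposition $\mathrm{span}(\m{1})\oplus\m{1}^{\perp}$.

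On $\m{1}^{\perp}$, $\bmat$ acts as zero, so $\T_{\TT}=a^{\TT}\Id=(\Id-\lr\lmat)^{2\TT}$ there. The noise term therefore vanishes, and the eigenvalue is the deterministic $[(1-\lafrac)+\lafrac(1-\lr\lam)^{\TT}]^{2}$. This does not depend on $\R$ at all, so I will note that the statement must be read as concerning the eigenvalue along $\m{1}$, the only one the noise touches.

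On $\m{1}$, $\bmat\m{1}=b\,\m{1}$ with
\begin{equation*}
b=\D\lr^{2}\lam^{2}\left(\tfrac{1}{\B}-\tfrac{1}{\D}\right)=\R\beta-\gamma,\qquad \beta\equiv\frac{\D\lr^{2}\lam^{2}}{\B_{\rm tot}},\quad \gamma\equiv\lr^{2}\lam^{2}>0,
\end{equation*}
using $\B=\B_{\rm tot}/\R$. The eigenvalue along $\m{1}$ is then
\begin{equation*}
[(1-\lafrac)+\lafrac(1-\lr\lam)^{\TT}]^{2}+\lafrac^{2}f(\R),\qquad f(\R)\equiv\frac{(a+b)^{\TT}-a^{\TT}}{\R}.
\end{equation*}

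\textbf{Step 2: reduce to $f$.} Only $f$ depends on $\R$, and it enters multiplied by $\lafrac^{2}$. So it suffices to show that $f$ is strictly increasing in $\R$, treating $\R$ as a continuous variable on the admissible range (and assuming $\lafrac\neq 0$).

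\textbf{Step 3: differentiate.} Since $\partial_{\R}b=\beta$,
\begin{equation*}
\R^{2}f'(\R)=\TT\beta\R(a+b)^{\TT-1}-\left[(a+b)^{\TT}-a^{\TT}\right].
\end{equation*}
Because $\B\le\D$ we have $b\ge0$, and $a\ge0$. Convexity of $x\mapsto x^{\TT}$ on $[0,\infty)$ (equivalently the mean value theorem) gives
\begin{equation*}
(a+b)^{\TT}-a^{\TT}\le\TT\, b\,(a+b)^{\TT-1}.
\end{equation*}
Since $b=\R\beta-\gamma<\R\beta$ with $\gamma>0$, it follows that $\R^{2}f'(\R)\ge\TT\gamma(a+b)^{\TT-1}>0$, provided $a+b>0$. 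Hence $f$, and with it the eigenvalue along $\m{1}$, is strictly increasing in $\R$; evaluating at integer $\R$ gives the claim.

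\textbf{Main obstacle.} I expect the main difficulty to be bookkeeping and degenerate cases rather than the inequality itself:
\begin{itemize}
\item fixing the normalization of $\m{1}^{\tpose}\m{1}$, which contributes the factor $\D$ in $b$;
\item handling the edge case $a+b=0$, which can only occur when $\lr\lam=1$ and $\B=\D$, and which I would exclude or treat separately;
\item stating clearly that the eigenvalues on $\m{1}^{\perp}$ are constant in $\R$, so "strictly increase" refers to the noise-carrying eigenvalue.
\end{itemize}
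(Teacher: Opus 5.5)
Your argument is correct, and it takes a different route from the paper's proof. The paper expands the noise term as $\sum_{s\geq 1} a_s\tilde{\mathbf{M}}_s(\eta)$ with $a_s=R^{s-1}\nu^2\eta^{2s}/B_{\rm tot}^{s}$. It then argues that under isotropy each $\tilde{\mathbf{M}}_s$ is a sum of products of commuting PSD matrices, hence PSD, so $F(R_2)-F(R_1)$ is PSD for $R_2>R_1$, and it finishes with eigenvalue monotonicity under PSD perturbations. You instead use the rank-one structure of $\mathbf{B}$ to split the map exactly along $\mathrm{span}(\mathbf{1})\oplus\mathbf{1}^{\perp}$, and you reduce the claim to a scalar derivative bound via convexity of $x\mapsto x^{T}$.

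The paper's route is more structural: it needs no closed form and gives weak monotonicity of every eigenvalue at once via Weyl's inequality. Yours gives the eigenvalues explicitly, and that exposes two points the paper's argument glosses over.

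\begin{itemize}
\item Your caveat about $\mathbf{1}^{\perp}$ is not just bookkeeping. In the isotropic case every $\tilde{\mathbf{M}}_s$ is a multiple of $\mathbf{1}\mathbf{1}^{\top}$, so the paper's PSD increment is rank one. Its final step ``$\lambda_{k,R_2}>\lambda_{k,R_1}$'' is therefore justified only as $\geq$. In fact the $D-1$ eigenvalues on $\mathbf{1}^{\perp}$ are exactly constant in $R$, so the strict statement holds only for the eigenvalue along $\mathbf{1}$, as you say.
\item The paper's formula for $a_s$ silently drops the $-1/D$ part of $\mathbf{B}$. With that formula $a_1$ is independent of $R$, so for $T=1$ the paper's increment vanishes. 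Your $\gamma=\eta^{2}\lambda^{2}>0$ is precisely that $-1/D$ correction, and it makes $R^{2}f'(R)\geq T\gamma(a+b)^{T-1}>0$ strict even for $T=1$. The paper's route can be repaired by using the exact coefficients $\frac{\nu^{2}\eta^{2s}}{R}\left(\frac{R}{B_{\rm tot}}-\frac{1}{D}\right)^{s}$, which still increase in $R$ when $B<D$.
\end{itemize}

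Your degenerate case $a+b=0$ can occur only at the endpoint $R=B_{\rm tot}/D$, so it does not affect strict monotonicity. Both proofs depend essentially on isotropy, and neither extends directly to anisotropic $\Lambda$.
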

See Appendix \ref{app:slowloco_proof} for the proof. We conjecture that similar results exist for other data distributions, but leave this to future work.

Equation \ref{eq:diloco_pvec} and the power series decomposition
can also be used to derive the relationships between hyperparameters in the two
methods. We first consider the simple case where $\lr\lam\TT\ll 1$ and
$\lafrac\lr\lam\TT\ll 1$ for all eigenvalues $\lam$ in $\lmat$.
In this setting the noise is small, and the dynamics is dominated by the deterministic term.
The gradient varies slowly, and the update equation is approximately $\pvec_{t+\TT}\approx (1-\lafrac\lr\lmat)\pvec_{t}$. In this setting, keeping the product $\lafrac\lr$
constant preserves the dynamics, across values of $\lr$ and $\R$.

The more interesting case is where there is non-trivial noise. Consider two training
scenarios: one with \la with learning rate pair $(\lafrac_{\rm loc}, \lr_{\rm loc})$ and
the other with \ladiloco with $\R$ workers and learning rate pair $(\lafrac_{\rm dil}, \lr_{\rm dil})$. If the total batch size $\B_{tot}$ is fixed across the methods, we can compute the
ratio of the power series coefficients $a_{s}$. The $\B_{tot}$ dependence vanishes and we
are left with:
\begin{equation}
\frac{a_{s, {\rm dil}}}{a_{s, {\rm loc}}} = \left(\frac{\lafrac_{\rm dil}}{\lafrac_{\rm loc}}\right)^{2}\R^{s-1}\left(\frac{\lr_{\rm dil}}{\lr_{\rm loc}}\right)^{2s}
\label{eq:coeff_rat_full}
\end{equation}
This ratio can be set to $1$ with the choice
\begin{equation}
\lr_{\rm dil} = \R^{-1/2}\lr_{\rm loc},~\lafrac_{\rm dil}\lr_{\rm dil}  = \lafrac_{\rm loc}\lr_{\rm loc}
\label{eq:r_sqrt_rule}
\end{equation}
This suggests that one way to combat the increase in noise due to additional workers is to decrease the inner learning rate, but compensate by increasing the outer learning rate.

Note that even if $a_{s, {\rm dil}} = a_{s, {\rm loc}}$, the noise terms are not quite
identical; in particular $\tl{\mmat}_{s}$ depends on the learning rate via its $\Id-\lr\lmat$
factors. As long as $\lr\lam<1$ for all $\lam$ in $\lmat$, the eigenvalues of $\tl{\mmat}_{s}$
are increasing as $\lr$ decreases. This means that \ladiloco will still have more noise than
LA even with the parameter matching. Also note that the deterministic term is also affected
by changes in $\lafrac$ and $\lmat$ unless all eigenvalues are small. Therefore this learning
rate scheme brings the dynamics of the two methods closer, but breaks down for large $\R$ and large $\lr$.

Nevertheless, this simple heuristic improves training outcomes in the linear regression
setting.
We trained models using \ladiloco
with various $\R$ across increasing inner learning rate $\lr$.
We compared the scaling suggestion in Equation \ref{eq:r_sqrt_rule} to the baseline of not
scaling $\lr$ at all with $\R$. The resulting curves show that using the same $(\lafrac, \lr)$ pairs across $\R$ causes discrepancies between the trajectories and larger $\R$ values
diverge at relatively low values of $\lr$ (Figure \ref{fig:r_scaling_main}, top). In contrast, the heuristic $R^{-1/2}$ scaling
(with counterscaling of $\lafrac$) keeps learning trajectories approximately collapsed at
small learning rates and keeps convergence at larger learning rates (Figure
\ref{fig:r_scaling_main}, bottom).
Both figures show that the theory captures well the average loss trajectories of exact
simulations.

\begin{figure}[h]
\begin{tabular}{cc}
\includegraphics[width=0.5\linewidth]{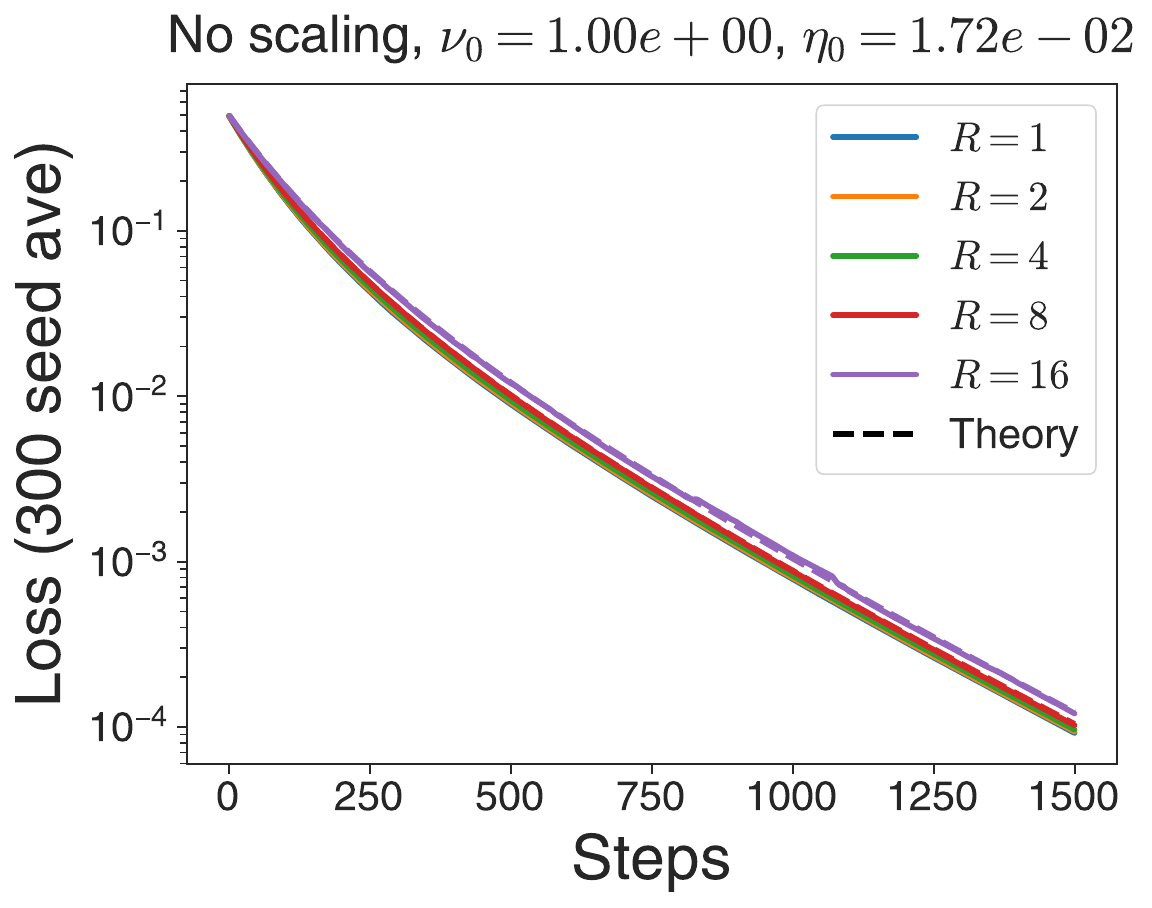} &
\includegraphics[width=0.5\linewidth]{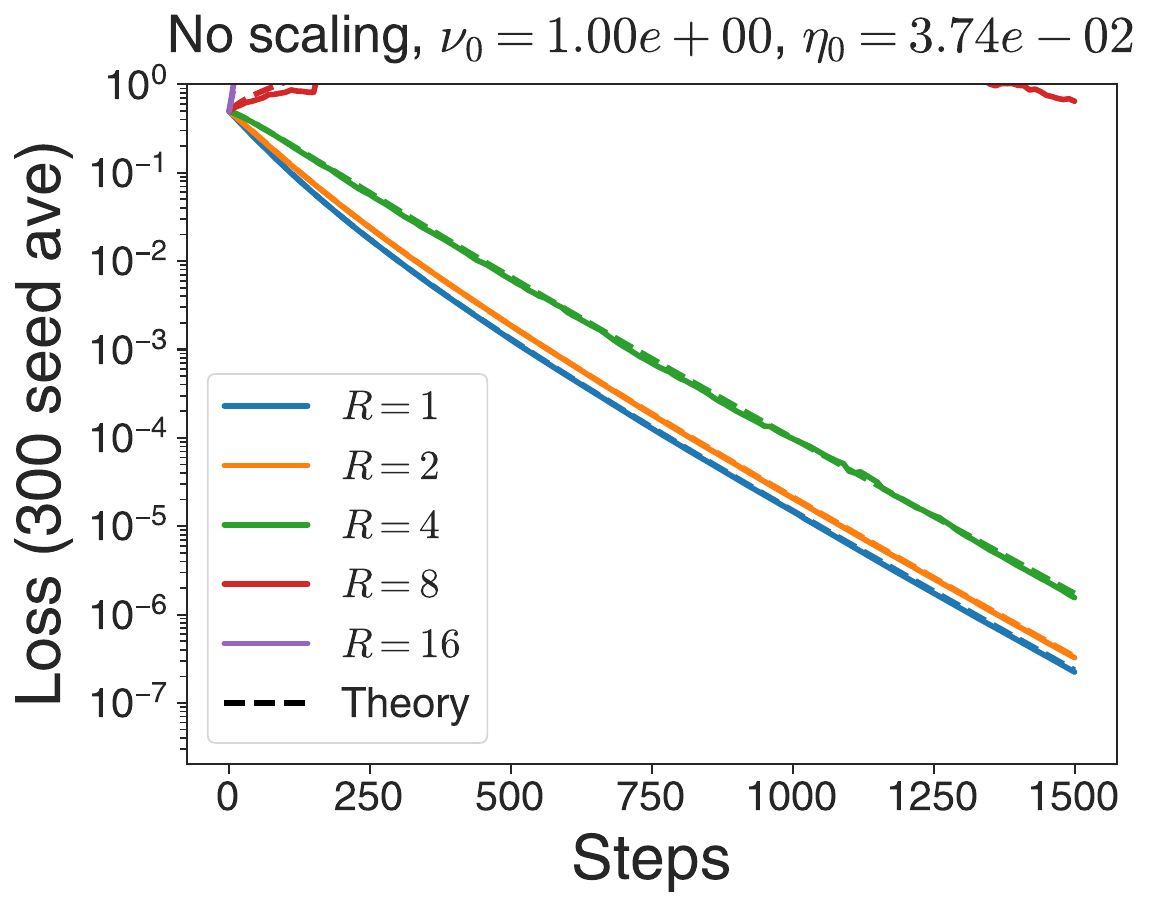}\\
\includegraphics[width=0.5\linewidth]{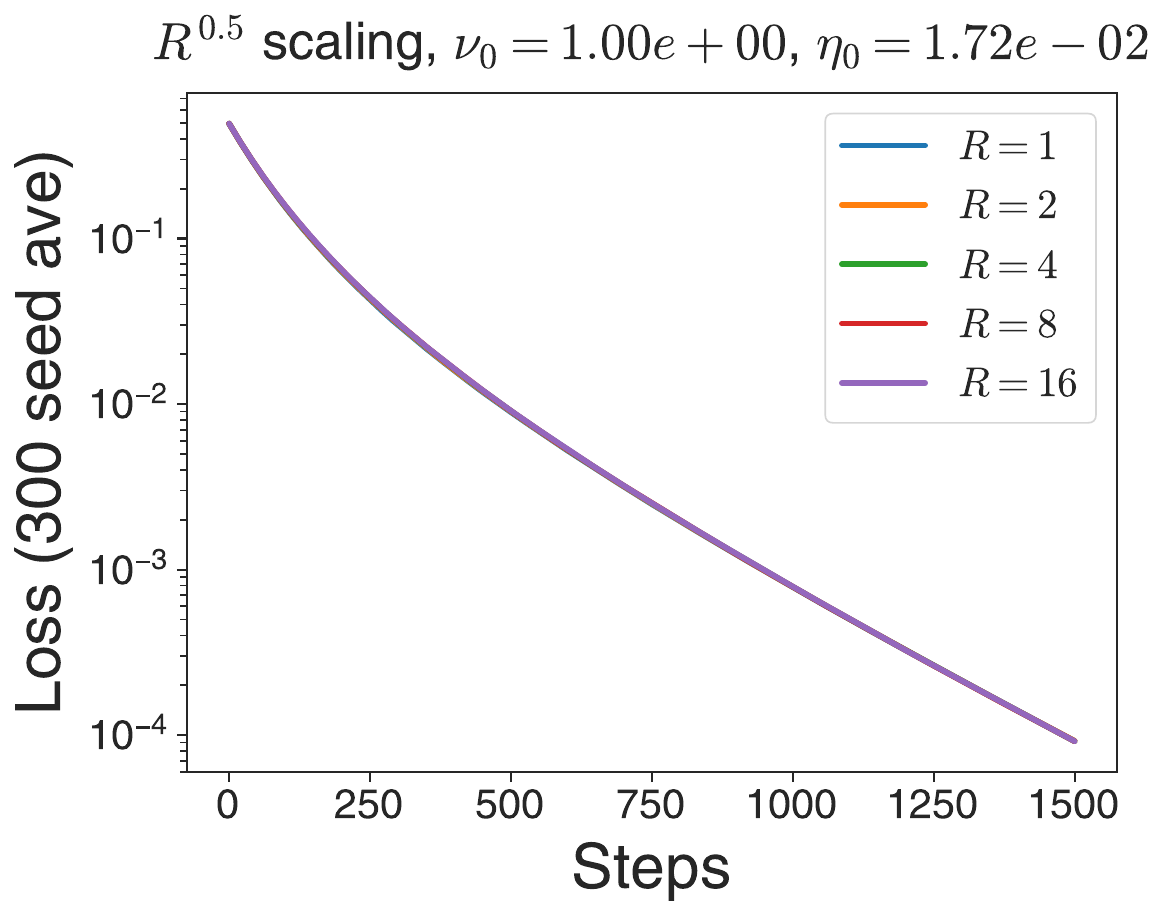} &
\includegraphics[width=0.5\linewidth]{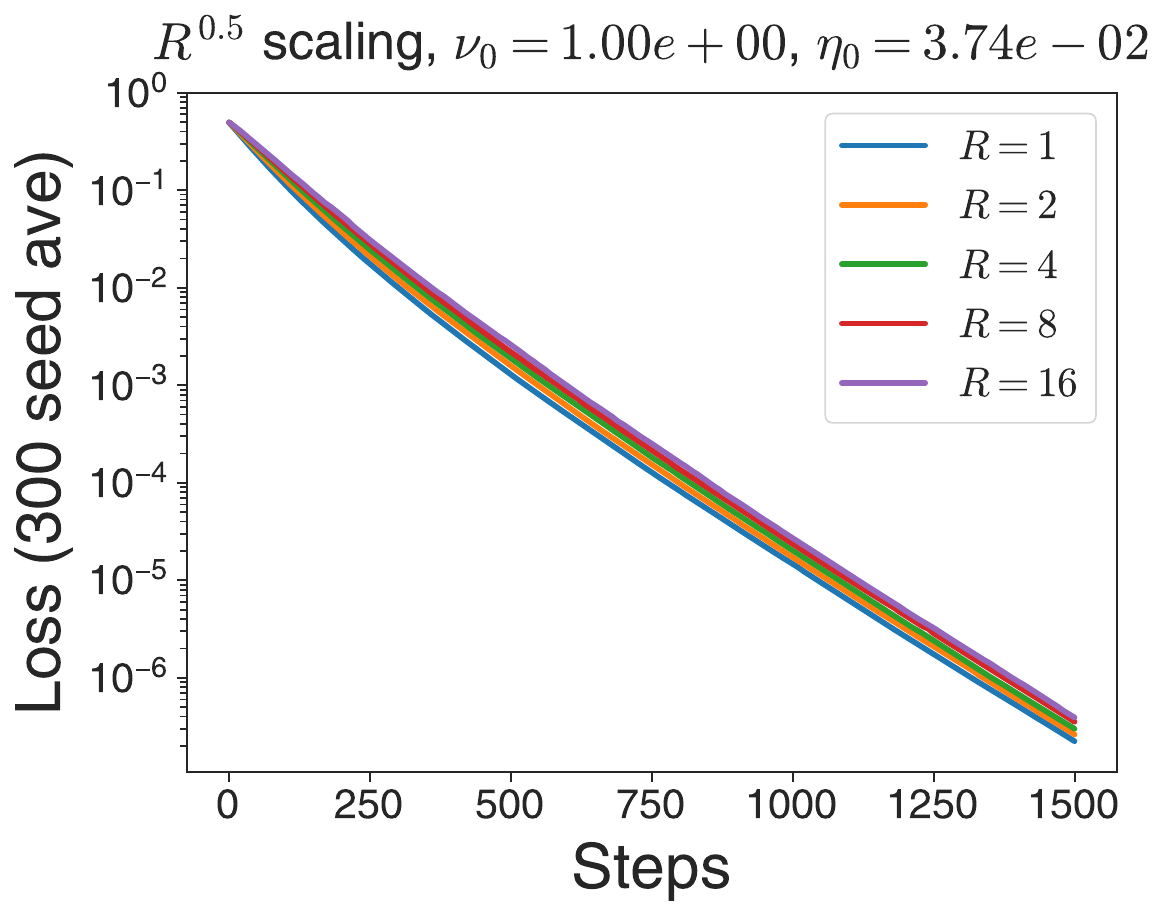}
\end{tabular}
\caption{Averaged loss curves for \la diloco (solid lines) are well captured by theoretical model (dashed lines) across many settings with fixed total batch size (all panels, $\D = 3200$, $\B_{tot} = 64$, various learning rates). Keeping $\lr$ and $\lafrac$ independent of
number of workers $\R$ leads to similar dynamics at smaller learning rates (top left) but causes learning curves for larger $\R$ to diverge early at larger learning rates (top right). $\R^{-1/2}$ scaling rule for $\lr$, with $\lafrac\lr$ fixed gives better correspondence
across $\R$ for all learning rates (bottom), but dynamics still becomes non-universal at larger $\lr$ (bottom right)}
\label{fig:r_scaling_main}
\end{figure}

In summary, our analysis suggests that at similar total batch sizes, synchronous training is
favored over asynchronous training. However, the outer learning rate in
\ladiloco can partially compensate for the increased noise generation by using
the theoretically motivated scaling rule.

\section{The benefits of dual momentum}

\label{sec:sla}

In practical scenarios, \diloco is used with more complex optimizers in
both the inner and outer loop. In particular, using momentum in both optimizers
is crucial to successful methods.

In order to improve understanding of what momentum does in these methods, we analyze ``Super''-Lookahead (\sla) ---
LA with SGD momentum in both the inner and outer optimizers. We focus on the deterministic
setting to obtain interpretable results, but there is no in-principle issue with extending the analysis to the stochastic high dimensional setting of the previous section (see \citet{collins2024hitting} for examples for other optimizers).

We directly analyze the eigenstructure of the per-cycle update, and show that the outer
optimizer acts on a normalized, effective problem, and that the outer momentum can accelerate
two types of eigenmodes --- small eigenmodes corresponding to flat directions, and large
eigenmodes near the edge of stability. Our analysis also shows that the particular form of
momentum in the outer optimizer is crucial to success of the methods, and that Nesterov-style
momentum is generally more effective than heavy ball. Overall the use of the second momentum
non-linearly transforms the relationship between Hessian eigenvalues and convergence rate.

We note that our results are compatible with prior work which used worst-case analysis to
analyze the effective condition number of related optimizers \citep{charles2021convergence,charles22iterated}.
In both settings the authors found that the outer momentum acts on an ``effective geometry''
and changes the condition number in a non-linear fashion, and also derived improved
bounds for Nesterov momentum.

\subsection{Review of single momentum}

\label{sec:single_mom_review}

We will briefly review the behavior of
heavy ball and Nesterov momentum in our linear regression setting,
in order to establish notation and build intuition for later analyses.
EMA style momentum can be written as:
\begin{equation}
\zetvec_{t+1} = (1-\bone)\g_{t}+\bone \zetvec_{t}
\end{equation}
\begin{equation}
\th_{t+1} = \th_{t}-\lr\zetvec_{t+1}
\end{equation}
where $\g$ is the gradient at $\th$.
Similar to SGD, we can write the update equations in function space. For full batch GD on
the linear regression setup, we have:
\begin{equation}
\z_{t+1} = \left(\Id-\lr(1-\bone)\ntk\right) \z_{t}-\lr\bone\kvec_{t}
\end{equation}
\begin{equation}
\kvec_{t+1} = (1-\bone)\ntk\z_{t}+\bone \kvec_{t}
\end{equation}
where $\kvec$ is the momentum vector in function/data space. Note that we have written the
equations to provide a simple linear recurrence relationship in $\z-\kvec$ space.

We can once again work in the eigenbasis of $\ntk$ by rotating $\z$ and $\kvec$. For each
eigenvalue $\lam$, the projections $z$ and $k$ of their respective eigenmodes evolve as
\begin{equation}
\begin{pmatrix}
z_{t+1}\\
k_{t+1}
\end{pmatrix} = \begin{pmatrix}
1-\lr(1-\bone)\lam & -\lr\bone\\
(1-\bone)\lam & \bone
\end{pmatrix}\begin{pmatrix}
z_{t}\\
k_{t}
\end{pmatrix}
\end{equation}
The convergence rate of the eigenmode is determined by the magnitude of the largest
eigenvalue $\om_{\rm max}$ of the transition matrix, since applying the transition matrix $t$ times results in
a total max eigenvalue of $\om_{\rm max}^{t}$. For convenience we define the convergence rate
$\convr\equiv -\log(|\om_{\rm max}|)$.

The eigenvalues $\om_{\pm}$ are given by
\begin{equation}
\begin{split}
\om_{\pm} & = \frac{1}{2}\left[(1-\lr(1-\bone)\lam+\bone)\right.\\
&\left.\pm\sqrt{(1-\lr(1-\bone)\lam+\bone)^{2}-4\bone}\right]
\end{split}
\end{equation}
For small learning rates $\lr\lam\ll1-\bone$ the largest eigenvalue is $\om_{+}\approx 1-\lr\lam$, with associated convergence rate $\convr = -\log(1-\lr\lam)\approx \lr\lam$ ---the same as SGD. The dynamics diverges for eigenvalues where
$\lr\lam>2\frac{1+\bone}{1-\bone}$, allowing a maximal learning rate that is
$\frac{1+\bone}{1-\bone}$ larger than GD. The ability to use this
larger learning rate without sacrificing convergence of larger eigenmodes
allows for faster convergence of smaller eigenmodes.

One interesting phenomenon induced by momentum is the existence of complex eigenmodes.
These correspond to oscillatory modes whose convergence rate given by the norm of their eigenvalues $|\om_{\pm}|=\sqrt{\bone}$ --- which is independent of $\lr$ and $\lam$.
If $\bone = 1-\dl$, $\dl\ll1$ the
convergence rate of the modes is $\convr\approx \dl/2$ --- a potentially slow convergence.
These modes occur when $(1-\lr(1-\bone)\lam+\bone)^{2}-4\bone<0$, which for $1-\bone\ll 1$
corresponds to
\begin{equation}
\frac{1-\bone}{4}<\lr\lam<\frac{2(1+\bone)}{1-\bone}
\end{equation}
That is, for $\bone$ close to $1$ there is a range of $\lr\lam$ spanning a factor of
$O((1-\bone)^{-2})$ which converge slowly. This is one of the issues with using $\bone$
too close to $1$; even in this simple problem there is a tradeoff between stability and
convergence rate that can't be mitigated through choice of learning rate.

Similarly, for Nesterov-style momentum we have:
\begin{equation}
\zetvec_{t} = (1-\bone)\g_{t}+\bone \zetvec_{t-1}
\end{equation}
\begin{equation}
\th_{t+1} = \th_{t}-\lr[(1-\bone) \g_{t}+\bone \zetvec_{t}]
\end{equation}
In the eigenspace this nets the linear system
\begin{equation}
\begin{pmatrix}
z_{t+1}\\
k_{t+1}
\end{pmatrix} = \begin{pmatrix}
1-(1-\bone^{2})\lr\lam & -\lr\bone^{2}\\
(1-\bone)\lam & \bone
\end{pmatrix}\begin{pmatrix}
z_{t}\\
k_{t}
\end{pmatrix}
\end{equation}
This form is in units which matches the dynamics of EMA momentum for small eigenvalues
$\lr\lam\ll1$. The traditional form can be recovered with the substitution $\tl{\lr}:= (1-\bone)\lr$.

For this update rule, $\om_{\pm}$ are complex when
\begin{equation}
\frac{(1-\bone)}{(1+\bone)^{2}} < \lr\lam < \frac{1}{1-\bone}
\end{equation}
once again giving a range of $O((1-\bone)^{-2})$ for complex eigenvalues. However, for
Nesterov-style momentum the eigenvalue magnitude and convergence rate depend on $\lr\lam$:
\begin{equation}
\begin{split}
|\om_{\pm}| & = \sqrt{\bone(1-(1-\bone)\lr\lam)},\\
\convr & = -\frac{1}{2}\log(\bone(1-(1-\bone)\lr\lam))
\end{split}
\end{equation}
For $\lr\lam\ll(1-\bone)^{-1}$ with complex eigenvalues we still have the same convergence issue as
before; however, for $\lr\lam\sim (1-\bone)^{-1}$ (near the end of the complex region)
the convergence is much improved. We will see in the
next section that this makes Nesterov momentum uniquely suited to use in the outer optimizer
of \sla.

\subsection{\sla and dual momenta}

We define the general Super Lookahead (\sla) optimizer as:
\begin{equation}
\psb_{t} := \th_{t}{\rm~(when~} t//\TT=0)
\end{equation}
\begin{equation}
\psb_{t+1}-\psb_{t} = InnerOpt(\g(\psb_{t})){\rm~for~}\TT{\rm~steps}
\end{equation}
\begin{equation}
\th_{t+T} = \th_{t}- OuterOpt(\th_{t}-\zetvec_{t+T})
\end{equation}
where $InnerOpt$ and $OuterOpt$ are gradient based optimizers. The outer optimizer uses
the difference $\th_{t}-\zetvec_{t+T}$ in place of a gradient.

We consider the case where $InnerOpt$ and $OuterOpt$ are GD with EMA style momentum.
We have inner and outer learning rates $\lr$ and $\lafrac$ as before, and now have
inner and outer momentum parameters $\bin$ and $\bout$.
In this setting the function space updates can be written as:
\begin{equation}
\tz_{t} := \z_{t}{\rm~(when~} t//\TT=0)
\label{eq:SLA_start}
\end{equation}
\begin{equation}
\left.
\begin{aligned}
\tz_{t+1} & = (\Id-\lr(1-\bin)\ntk)\tz_{t}-\lr\bin\tk_{t}\\
\tk_{t+1} & = (1-\bin)\ntk\tz_{t}+\bin\tk_{t}
\end{aligned}
\right\} \quad {\rm~for~}\TT{\rm~steps}
\end{equation}
\begin{equation}
\kvec_{t+T} = -(1-\bout)[\tz_{t+T-1}-\z_{t}] +\bout\kvec_{t}
\end{equation}
\begin{equation}
\z_{t+\TT} = \z_{t}-\lafrac\kvec_{t+T}
\label{eq:SLA_end}
\end{equation}
We note that only $\tz_{t}$ is synced between cycles; $\tk$ is only modified via
the inner optimizer.

In this setting, momentum rules can be generally written as
\begin{equation}
\begin{pmatrix}
\tz_{t+\TT} \\
\tk_{t+\TT} \\
\z_{t+\TT} \\
\kvec_{t+\TT}
\end{pmatrix} = 
\Tsyn \Tout \Tin \begin{pmatrix}
\tz_{t} \\
\tk_{t} \\
\z_{t} \\
\kvec_{t}
\end{pmatrix}
\end{equation}
where $\Tin$ represents the $\TT$ steps of the inner optimizer, $\Tout$ represents the outer
optimizer step, and $\Tsyn$ represents the syncing of inner and outer optimizers (in this case,
the syncing of $\tz_{t}$ to $\z_{t}$ at the end of the cycle. By analyzing the eigenvalues of
the product $\Ttot \equiv \Tsyn \Tout \Tin$, we can understand the convergence rates of
SLA as a function of the optimizer parameters. Detailed expressions for the matrices can be
found in Appendix \ref{app:sla_det_dyn_full}.

\subsection{Analysis of a $2$-dimensional reduction}

In order to build intuition for the convergence behavior, it is useful to define a variant
of the optimizer where the eigenvalues of $\Ttot$ are analytically tractable. Consider the
additional rule $\tk_{t}:=0$ at the end of the $\TT$ step cycle. This modifies $\Tsyn$ only.

Working once again in the $\ntk$ eigenbasis, the dynamics per-cycle for each eigendirection
can be described in terms of the projections $\zz$ and $\kk$ alone, since no information
from $\tz$ or $\tk$ is carried over cycle-to-cycle. If we define $\Tin$ as the block-diagonal
matrix
\begin{equation}
\Tin = \begin{pmatrix}
\mF & 0\\
0 & \Id
\end{pmatrix},~
\mF = \begin{pmatrix}
1-(1-\bin)\lr\lam & -\eta\bin \\
(1-\bin)\lam & \bin
\end{pmatrix}^{\TT}
\end{equation}
then the update rule reduces to
\begin{equation}
\begin{pmatrix}
\zz_{t+\TT} \\
\kk_{t+\TT}
\end{pmatrix} = 
\begin{pmatrix}
1-\lafrac(1-\bout)[1-\FF_{11}] & -\lafrac\bout\\
(1-\bout)[1-\FF_{11}] & \bout\\
\end{pmatrix}
\begin{pmatrix}
\zz_{t} \\
\kk_{t}
\end{pmatrix}
\end{equation}
We immediately see that \sla amounts to doing Momentum-GD on an effective spectrum
$1-\FF_{11}$. Note that this structure holds for any inner optimizer whose $\TT$ step
optimization can be written as a linear function of $\zz_{t}$ and $\kk_{t}$.

If $1-\FF_{11}\ll 1$, then the corresponding convergence rate per-cycle is
$\convr\approx \lafrac (1-\FF_{11})$ --- which suggests that an outer learning rate
$\lafrac>1$ can improve convergence. There are two main
scenarios that lead to this case. The first is if $\lr\lam\TT\ll 1$. In this case,
$\FF_{11}\approx 1-\TT\lr\lam$. If we define the \emph{per step} convergence rate
$\convr\equiv -\frac{1}{\TT}\log(|\om_{\rm max}|)$ for the largest eigenvalue of $\Ttot$,
we have $\convr\approx \lafrac\lr\lam$.

The second scenario is slow convergence due to large eigenvalues. For example if $\bin = 0$
(no inner momentum) and $\lr\lam = 2-\dl$ for $\dl\ll 1$, then $\FF_{11} \approx 1-\TT\dl$ and
the per-cycle convergence rate becomes $\convr\approx \lafrac\dl$ --- again giving potential
for acceleration via the outer learning rate. A similar effect applies to the inner
eigenmodes with imaginary eigenvalues if $\TT(1-\bin)\ll1$, since those will converge slowly
as well.

This analysis suggests that the second momentum, paired with $\lafrac>1$, can improve
improve convergence not only of small eigenmodes like the inner momentum, but oscillating
and slowly converging eigenmodes at the edge of stability/in the critical damping regime.

However, there is one large downside: eigenmodes that are damped by the outer momentum
(with the corresponding imaginary eigenvalues) will be slowed down. For $\bout = 1-\dl$,
$\dl\ll1$ their per-step convergence rate will be $\convr = \frac{1}{\TT}(1-\dl/2)$ -- that is the per-step convergence rate is \emph{smaller} for longer cycles.
This is
because their \emph{per-cycle} convergence rate depends on the eigenvalue $\bout$.

This convergence issue is an even bigger problem in the outer optimizer than the inner
optimizer; any eigenmode that makes significant optimization progress in $\TT$ steps will have
small $\FF_{11}$. The best-optimizing eigenmodes in the inner optimizer can become the
slowest in the outer optimizer. These results suggest that $\bout$ will likely need to be
even further from $1$ than $\bin$.

These convergence issues can be greatly mitigated by using Nesterov-style momentum.
The $2$-d dynamics becomes:
\begin{equation}
\begin{pmatrix}
\zz_{t+\TT} \\
\kk_{t+\TT}
\end{pmatrix} = 
\begin{pmatrix}
1-\lafrac(1-\bout^{2})(1-\FF_{11}) & -\lafrac\bout^{2}\\
(1-\bout)(1-\FF_{11}) & \bout
\end{pmatrix}
\begin{pmatrix}
\zz_{t} \\
\kk_{t}
\end{pmatrix}
\end{equation}
Once again, the effective eigenvalues are given by $1-\FF_{11}$. This means that in the
damped/complex eigenvalue regime, the per-cycle eigenvalue magnitude is
$\sqrt{\bout(1-(1-\bout)\lafrac(1-\FF_{11}))}$. Any eigenvalues where $\lafrac(1-\FF_{11})\ll (1-\bout)^{-1}$ will still experience slowdown.
Since $1-\FF_{11}\in [0, 1]$ and $(1-\bout)^{-1}>1$, the only hope to combat this is to pick
$\lafrac\geq (1-\bout)^{-1}$ (corresponding to $\lafrac\geq 1$ with traditionally parameterized
Nesterov momentum).

Unlike the inner optimizer, where the learning rate can only be tuned to accelerate a small
range of eigenmodes (those with $\lr\lam\sim(1-\bin)^{-1}$, there is a special value of
$\lafrac$ that can mitigate the damping for many eigenmodes. The choice $\lafrac= (1-\bout)^{-1}$
accelerates all eigenmodes with effective eigenvalues $1-\FF_{11}\approx 1$, or alternatively
$\FF_{11}\ll 1$. There is a large class of eigendirections with this property: namely
eigenmodes which make significant progress on the loss in a single cycle. These modes are
normally damped by the outer optimizer, but with Nesterov momentum and a well-tuned outer
learning rate the damping is mitigated. For example, if $\FF_{11} = ae^{-\tlam\TT}$ as $\TT$
is varied, and
$\lafrac= (1-\bout)^{-1}$, the convergence rate per cycle is $\convr>\TT\tlam$, and per step is
$\convr>\tlam$.

Overall, this analysis shows that applying two momenta at different scales has non-trivial
effects that can't be replicated by a single momentum. With the correct choice of momentum
rule the outer momentum can speed up convergence of eigendirections slowed down by
the inner momentum, while preserving progress on well-converging modes. This effect can't
be captured by simple interventions like adding weight averaging on top of momentum (Appendix \ref{app:weight_ema_vs_sla}) or proposed algorithms like GPA \citep{defazio2025smoothing} which
try to capture some of the effects of \sla (Appendix \ref{app:gpa_vs_sla}).

\subsection{Numerical results}

We analyzed convergence rates for EMA and Nesterov momenta by performing eigendecomposition 
of $\Ttot$ for both the reduced system
analyzed above, as well as the original system
described by Equations \ref{eq:SLA_start} to \ref{eq:SLA_end} where $\tk$ is not reset
to $0$ every cycle. We fixed $\lr = 1$, $\lam = 0.2$, $\bin = 0.9$, $\bout = 0.8$,
and kept the inner optimizer fixed to EMA momentum. This put the inner optimizer in the
imaginary eigenvalue regime.
These values were chosen to illustrate our
general points, which hold across a wide range of parameter values.

We found that the convergence rates of the reduced system and the
original system are similar for larger $\TT$ (Figure \ref{fig:double_mom_conv_rate}).
Resetting $\tk$ to $0$ tends to improve the convergence rate for small $\TT$. If we set
$\lafrac$ to a generic stable value, we see that the convergence rate for Nesterov is
larger than the convergence rate for EMA momentum, by a fixed factor for large $\TT$
(Figure \ref{fig:double_mom_conv_rate}, top). However we see that the per-step convergence
rate decreases as $1/\TT$ because the per-cycle convergence rate is fixed, as predicted
by the theory.

However, if we set $\lafrac$ to the critical value $(1-\bout)^{-1}$, then we see that the per-step convergence rate of Nesterov momentum goes to a constant as predicted
(Figure \ref{fig:double_mom_conv_rate}, bottom). This highlights the point that the critical
outer learning rate is useful for preserving convergence rates for large $\TT$.

\begin{figure}[h]
\centering
\begin{tabular}{c}
\includegraphics[width=0.8\linewidth]{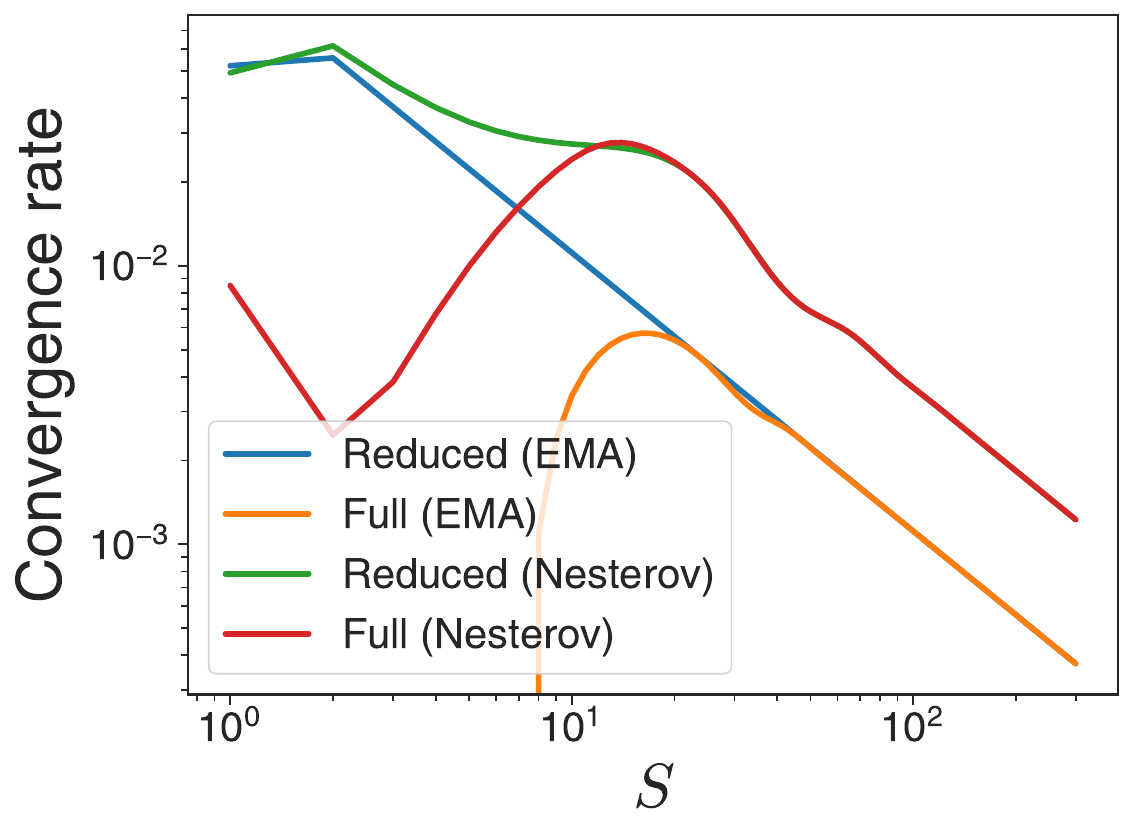}\\
\includegraphics[width=0.8\linewidth]{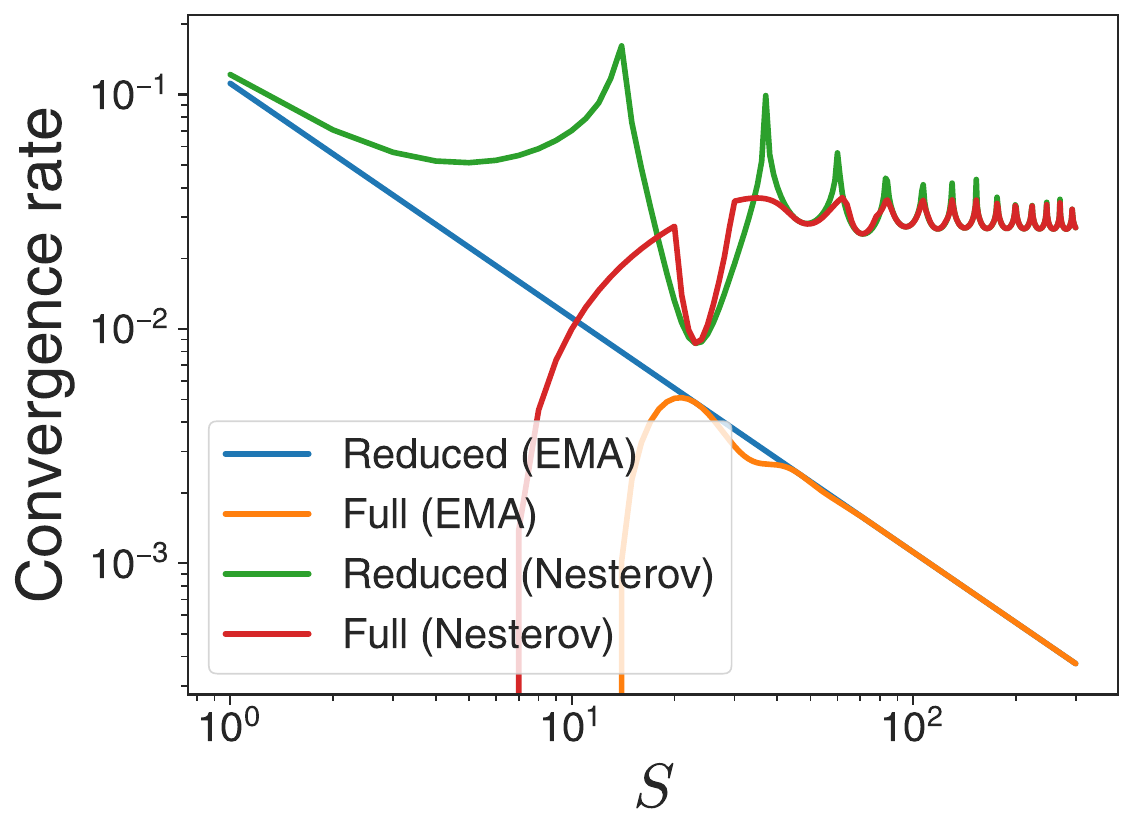}
\end{tabular}
\caption{Eigenmodes damped by inner optimizer show decreasing per-step convergence rate with
momentum-GD in the outer optimizer (top, $\lafrac = 2$). Convergence rate is improved for Nesterov momentum. At the critical $\lafrac = (1-\bout)^{-1}$,
Nesterov convergence rate becomes $\TT$-invariant at large $\TT$ (bottom). Reduced system with $\tk_{t}:=0$ every cycle has similar
behavior to full dynamics with $\tk_{t}$ preserved for larger $\TT$, but shows faster
convergence/better stability at smaller $\TT$. All experiments have $\lr = 1$, $\lam = 0.2$, $\bin = 0.9$, $\bout = 0.8$.}
\label{fig:double_mom_conv_rate}
\end{figure}

\section{Discussion}

Our analysis of \ladiloco and \la represents the first average case analysis of these algorithms
in the high dimensional setting. These tools reveal the complex tradeoff between signal and
noise that controls the convergence rates of the algorithms in this setting. Our results
show that \la, even with a single worker, is quantitatively different from SGD and with
sufficient eigenvalue dispersion can even optimize more quickly than SGD.
Our analysis of \ladiloco suggests that all else equal, synchronous optimizers are better
than asynchronous ones, but it is possible to design good hyperparameter transfer schemes for a modest number of workers. Our analysis of the momentum-equipped \sla variants in the
deterministic setting give new insights into the opportunities for acceleration, and suggest that Nesterov momentum minimizes the downsides of the approach ---
mirroring and perhaps explaining observations in practical settings \citep{charles2025communication, kallusky2025snoo}.

Overall the benefits of two-phase optimizers in this setting come from the fact that the
outer optimizer operates on an ``effective'' problem defined by the action of the inner
optimizer. These benefits are due to new tradeoffs between speed and stability surfaced by the structure of the optimizer, that are easily accessible via hyperparameter tuning.
These effects can't be recreated with
traditional optimizers equipped with weight averaging (Appendix \ref{app:weight_ema_vs_sla}), nor by
algorithms like GPA \citep{defazio2025smoothing} which seek to mimic the benefits of \sla (Appendix \ref{app:gpa_vs_sla}).

The main weakness of our analysis is that it assumes the geometry of the loss landscape
(e.g. the Hessian/NTK) is fixed. One next step would be to analyze cases where the
eigenvalues and eigenvectors of the Hessian shift over time, or where the loss landscape is
not well characterized by first and second derivatives on the scales relevant for
the learning dynamics. In particular, it is known that algorithmic choices induce feedback
on the large eigenvalues of the loss landscape \citep{cohen2021gradient, cohen2022adaptive, roulet2024stepping}; understanding how
this phenomenon affects two-phase optimizers is likely crucial to improve these methods
further, and scale them to larger workloads.

Our analysis suggests that the optimization dynamics of these two-phase optimizers
has unique properties not found in most commonly used training algorithms.
Though the periodic structure is an aesthetic departure from typical algorithms, the ability
to explicitly access different time and curvature scales is robustly beneficial.
We speculate that these effects can eventually be bridged with other methods that exploit multiple timescales 
\citep{defazio2024road, ferbach2025dimension}. The insights presented here are just the starting point to
new exciting paradigms in training algorithms.

\section{Acknowledgments}

The author would like to thank Keith Rush and Zachary Charles for their help in learning about
\diloco and other optimizers, and their guidance in preparing the manuscript.


\bibliography{sla_refs}

@inproceedings{charles2021convergence,
  title={Convergence and accuracy trade-offs in federated learning and meta-learning},
  author={Charles, Zachary and Kone{\v{c}}n{\`y}, Jakub},
  booktitle={International Conference on Artificial Intelligence and Statistics},
  pages={2575--2583},
  year={2021},
  organization={PMLR}
}

@article{charles2025communication,
  title={Communication-Efficient Language Model Training Scales Reliably and Robustly: Scaling Laws for DiLoCo},
  author={Charles, Zachary and Teston, Gabriel and Dery, Lucio and Rush, Keith and Fallen, Nova and Garrett, Zachary and Szlam, Arthur and Douillard, Arthur},
  journal={arXiv preprint arXiv:2503.09799},
  year={2025}
}

@article{douillard2023diloco,
  title={Diloco: Distributed low-communication training of language models},
  author={Douillard, Arthur and Feng, Qixuan and Rusu, Andrei A and Chhaparia, Rachita and Donchev, Yani and Kuncoro, Adhiguna and Ranzato, Marc'Aurelio and Szlam, Arthur and Shen, Jiajun},
  journal={arXiv preprint arXiv:2311.08105},
  year={2023}
}

@article{zhang2019lookahead,
  title={Lookahead optimizer: k steps forward, 1 step back},
  author={Zhang, Michael and Lucas, James and Ba, Jimmy and Hinton, Geoffrey E},
  journal={Advances in neural information processing systems},
  volume={32},
  year={2019}
}

@article{collins2024hitting,
  title={Hitting the high-dimensional notes: An ode for sgd learning dynamics on glms and multi-index models},
  author={Collins-Woodfin, Elizabeth and Paquette, Courtney and Paquette, Elliot and Seroussi, Inbar},
  journal={Information and Inference: A Journal of the IMA},
  volume={13},
  number={4},
  pages={iaae028},
  year={2024},
  publisher={Oxford University Press}
}

@article{lee2022trajectory,
  title={Trajectory of mini-batch momentum: batch size saturation and convergence in high dimensions},
  author={Lee, Kiwon and Cheng, Andrew and Paquette, Elliot and Paquette, Courtney},
  journal={Advances in Neural Information Processing Systems},
  volume={35},
  pages={36944--36957},
  year={2022}
}

@article{kallusky2025snoo,
  title={SNOO: Step-K Nesterov Outer Optimizer-The Surprising Effectiveness of Nesterov Momentum Applied to Pseudo-Gradients},
  author={Kallusky, Dominik and Rao, Vinay and Nandavanam, Vishal and Shi, Hao-Jun Michael},
  journal={arXiv preprint arXiv:2510.15830},
  year={2025}
}

@inproceedings{reddi2021adaptive,
  title={Adaptive Federated Optimization},
  author={Sashank J. Reddi and Zachary Charles and Manzil Zaheer and Zachary Garrett and Keith Rush and Jakub Kone{\v{c}}n{\'y} and Sanjiv Kumar and Hugh Brendan McMahan},
  booktitle={International Conference on Learning Representations},
  year={2021},
  url={https://openreview.net/forum?id=LkFG3lB13U5}
}

@inproceedings{mcmahan2017communication,
  title={Communication-efficient learning of deep networks from decentralized data},
  author={McMahan, Brendan and Moore, Eider and Ramage, Daniel and Hampson, Seth and y Arcas, Blaise Aguera},
  booktitle={Artificial intelligence and statistics},
  pages={1273--1282},
  year={2017},
  organization={PMLR}
}

@article{khaled2025understanding,
  title={Understanding outer optimizers in local sgd: Learning rates, momentum, and acceleration},
  author={Khaled, Ahmed and Kale, Satyen and Douillard, Arthur and Jin, Chi and Fergus, Rob and Zaheer, Manzil},
  journal={arXiv preprint arXiv:2509.10439},
  year={2025}
}

@article{agarwala2024high,
  title={High dimensional analysis reveals conservative sharpening and a stochastic edge of stability},
  author={Agarwala, Atish and Pennington, Jeffrey},
  journal={arXiv preprint arXiv:2404.19261},
  year={2024}
}

@article{roulet2024stepping,
  title={Stepping on the edge: Curvature aware learning rate tuners},
  author={Roulet, Vincent and Agarwala, Atish and Grill, Jean-Bastien and Swirszcz, Grzegorz and Blondel, Mathieu and Pedregosa, Fabian},
  journal={Advances in Neural Information Processing Systems},
  volume={37},
  pages={47708--47740},
  year={2024}
}

@article{cohen2021gradient,
  title={Gradient descent on neural networks typically occurs at the edge of stability},
  author={Cohen, Jeremy M and Kaur, Simran and Li, Yuanzhi and Kolter, J Zico and Talwalkar, Ameet},
  journal={arXiv preprint arXiv:2103.00065},
  year={2021}
}

@article{cohen2022adaptive,
  title={Adaptive gradient methods at the edge of stability},
  author={Cohen, Jeremy M and Ghorbani, Behrooz and Krishnan, Shankar and Agarwal, Naman and Medapati, Sourabh and Badura, Michal and Suo, Daniel and Cardoze, David and Nado, Zachary and Dahl, George E and Gilmer, Justin},
  journal={arXiv preprint arXiv:2207.14484},
  year={2022}
}

@article{ferbach2025dimension,
  title={Dimension-adapted Momentum Outscales SGD},
  author={Ferbach, Damien and Everett, Katie and Gidel, Gauthier and Paquette, Elliot and Paquette, Courtney},
  journal={arXiv preprint arXiv:2505.16098},
  year={2025}
}

@article{defazio2024road,
  title={The road less scheduled},
  author={Defazio, Aaron and Yang, Xingyu and Mehta, Harsh and Mishchenko, Konstantin and Khaled, Ahmed and Cutkosky, Ashok},
  journal={Advances in Neural Information Processing Systems},
  volume={37},
  pages={9974--10007},
  year={2024}
}

@inproceedings{malinovskiy2020local,
  title={From local SGD to local fixed-point methods for federated learning},
  author={Malinovskiy, Grigory and Kovalev, Dmitry and Gasanov, Elnur and Condat, Laurent and Richtarik, Peter},
  booktitle={International Conference on Machine Learning},
  pages={6692--6701},
  year={2020},
  organization={PMLR}
}

@InProceedings{charles22iterated,
  title = 	 {Iterated Vector Fields and Conservatism, with Applications to Federated Learning},
  author =       {Charles, Zachary and Rush, Keith},
  booktitle = 	 {Proceedings of The 33rd International Conference on Algorithmic Learning Theory},
  pages = 	 {130--147},
  year = 	 {2022},
  editor = 	 {Dasgupta, Sanjoy and Haghtalab, Nika},
  volume = 	 {167},
  series = 	 {Proceedings of Machine Learning Research},
  month = 	 {29 Mar--01 Apr},
  publisher =    {PMLR},
  url = 	 {https://proceedings.mlr.press/v167/charles22a.html},
}

@inproceedings{
douillard2025streaming,
title={Streaming DiLoCo with overlapping communication},
author={Arthur Douillard and Yani Donchev and J Keith Rush and Satyen Kale and Zachary Charles and Gabriel Teston and Zachary Garrett and Jiajun Shen and Ross McIlroy and David Lacey and Alexandre Rame and Arthur Szlam and MarcAurelio Ranzato and Paul R Barham},
booktitle={Second Conference on Language Modeling},
year={2025},
url={https://openreview.net/forum?id=yYk3zK0X6Q}
}

@misc{jaghouar2024opendilocoopensourceframeworkglobally,
      title={OpenDiLoCo: An Open-Source Framework for Globally Distributed Low-Communication Training}, 
      author={Sami Jaghouar and Jack Min Ong and Johannes Hagemann},
      year={2024},
      eprint={2407.07852},
      archivePrefix={arXiv},
      primaryClass={cs.LG},
      url={https://arxiv.org/abs/2407.07852}, 
}

@article{defazio2025smoothing,
  title={Smoothing DiLoCo with Primal Averaging for Faster Training of LLMs},
  author={Defazio, Aaron and Mishchenko, Konstantin and Raman, Parameswaran and Shi, Hao-Jun Michael and Xiao, Lin},
  journal={arXiv preprint arXiv:2512.17131},
  year={2025}
}
\bibliographystyle{icml2026}

\newpage
\appendix
\onecolumn

\section{High dimensional dynamics of \la and \ladiloco}

\label{app:highd_dynamics}

\subsection{Derivation of the dynamics of $\pvec$}

\label{app:pvec_derivation}

We derive the dynamics in the case of \ladiloco here; \la is then the special case where
$\R = 1$. We have the dynamics:
\begin{equation}
\tz_{t;k} := \z_{t}{\rm~(when~} t//\TT=0)
\end{equation}
\begin{equation}
\tz_{t+1;k}-\tz_{t;k} = -\lr\frac{\D}{\B}\ntk\pmat_{t;k}\tz_{t;k}{\rm~for~}\TT{\rm~steps}
\end{equation}
\begin{equation}
\z_{t+\TT}-\z_{t} = \frac{\lafrac}{\R}\sum_{k=1}^{\R}(\tz_{t+\TT;k}-\z_{t})
\end{equation}
where the $\pmat_{t;k}$ are i.i.d. projection matrices with exactly $\B$ $1$s on the diagonal and $0$ everywhere else, for randomly chosen indices.
The dynamics of the first moment of $\z$ is given by:
\begin{equation}
\expect[\z_{t+\TT}|\z_{t}] = \left[(1-\lafrac)+\lafrac(1-\lr\ntk)\right]\z_{t}
\end{equation}
The second moment evolves as
\begin{equation}
\expect[\z_{t+\TT}\z_{t+\TT}^{\tpose}|\z_{t}] = \expect\left[\left((1-\lafrac)\z_{t}+\frac{\lafrac}{\R}\sum_{k=1}^{\R}\tz_{t+\TT;k}\right)\left((1-\lafrac)\z_{t}+\frac{\lafrac}{\R}\sum_{k=1}^{\R}\tz_{t+\TT;k}\right)^{\tpose}\right]
\end{equation}
Simplifying, we have:
\begin{equation}
\begin{split}
\expect[\z_{t+\TT}\z_{t+\TT}^{\tpose}|\z_{t}] & = (1-\lafrac)^{2}\z_{t}\z_{t}^{\tpose}+\frac{\lafrac(1-\lafrac)}{\R}\left(\sum_{k=1}^{\R}\expect[\tz_{t+\TT;k}]\z_{t}^{\tpose}+\z_{t}\expect[\tz_{t+\TT;k}^{\tpose}]\right) \\
&+\frac{\lafrac^{2}}{\R^{2}}\sum_{j=1}^{\R}\sum_{k=1}^{\R}\expect[\tz_{t+\TT;j}\tz_{t+\TT;k}^{\tpose}]
\end{split}
\end{equation}
Note that $\tz_{t+\TT;j}$ and $\tz_{t+\TT;k}$ are independent for $j\neq k$. We also note that the first and second moments of $\tz_{t+\TT;k}$ given $\z_{t}$ are identical to those
of $\z_{t+\TT}$ from SGD.  We define:
\begin{equation}
\mum_{t,s}\equiv \expect_{\rm SGD}[\z_{t+s}|\z_{t}],~\covm_{t,s}\equiv \expect_{\rm SGD}[\z_{t+s}\z_{t+s}^{\tpose}|\z_{t}]
\end{equation}
Then we have:
\begin{equation}
\expect[\z_{t+\TT}\z_{t+\TT}^{\tpose}|\z_{t}]  = (1-\lafrac)^{2}\z_{t}\z_{t}^{\tpose}+\lafrac(1-\lafrac)\left(\mum_{t,\TT}\z_{t}^{\tpose}+\z_{t}\mum_{t,\TT}^{\tpose}\right)+\lafrac^{2}\left(1-\frac{1}{\R}\right)\mum_{t,\TT}\mum_{t,\TT}^{\tpose}+\frac{1}{\R}\covm_{t,\TT}
\end{equation}
Grouping terms by $\R$ we have:
\begin{equation}
\expect[\z_{t+\TT}\z_{t+\TT}^{\tpose}|\z_{t}]  = \left[(1-\lafrac)\z_{t}+\lafrac\mum_{t,\TT}\right]\left[(1-\lafrac)\z_{t}+\lafrac\mum_{t,\TT}\right]^{\tpose} +\frac{\lafrac^{2}}{\R}\left(\covm_{t,\TT}-\mum_{t,\TT}\mum_{t,\TT}^{\tpose}\right)
\end{equation}

To obtain the dynamics of $\pvec_{t}$, we use the definition $\pvec_{t} \equiv \lmat^{+}\diag(\V^{\tpose}\expect_{\pmat}[\z_{t}\z_{t}^{\tpose}]\V)$ combined with the
previous results from SGD. We have:
\begin{equation}
\mum_{t,\TT} = (\Id-\lr\ntk)^{\TT}\z_{t}, \V^{\tpose}\mum_{t,\TT}\mum_{t,\TT}^{\tpose}\V = (\Id-\lr\lmat)^{\TT}\V^{\tpose}\z_{t}\z_{t}^{\tpose}\V(\Id-\lr\lmat)^{\TT}
\end{equation}
which allows us to simplify the first two terms as
\begin{equation}
\lmat^{+}\diag(\V^{\tpose}\left[(1-\lafrac)\z_{t}+\lafrac\mum_{t,\TT}\right]\left[(1-\lafrac)\z_{t}+\lafrac\mum_{t,\TT}\right]^{\tpose}\V) = [(1-\lafrac)\Id+\lafrac(\Id-\lr\lmat)^{\TT}]^{2}\pvec_{t}
\end{equation}
In high dimensions, the results on SGD show that $\covm_{t,\TT}$ transforms to $(\amat+\bmat)^{\TT}\pvec_{t}$, where
\begin{equation}
\amat  \equiv(\Id-\lr\lmat)^2,~
~\bmat\equiv \left(\frac{1}{\B}-\frac{1}{\D}\right)\lr^2\lmat\m{1}\m{1}^{\tpose}\lmat.
\end{equation}
This gives us the final equations for $\pvec_{t}$ for \ladiloco:
\begin{equation}
\pvec_{t+\TT}  = \left([(1-\lafrac)\Id+\lafrac(\Id-\lr\lmat)^{\TT}]^{2}+\frac{\lafrac^{2}}{\R}[\T_{\TT}-(\Id-\lr\lmat)^{2\TT}]\right)\pvec_{t}
\end{equation}
where
\begin{equation}
\T_{s} \equiv (\amat+\bmat)^{s} = \left[(\Id-\lr\lmat)^2+\left(\frac{1}{\B}-\frac{1}{\D}\right)\lr^2\lmat\m{1}\m{1}^{\tpose}\lmat\right]^{s}
\end{equation}

\subsection{Deterministic dynamics}

\label{app:det_dynamics}

In the deterministic setting (full batch training), the update equation is:
\begin{equation}
\pvec_{t+1} = [(1-\lafrac)\Id+\lafrac(\Id-\lr\lmat)^{\TT}]^{2}\pvec_{t}
\end{equation}
This diagonalizes in the eigenbasis. To understand convergence it is sufficient to analyze the $1$-dimensional system:
\begin{equation}
p_{t+1} = [(1-\lafrac)+\lafrac(1-\lr\lam)^{\TT}]^{2} p_{t}
\end{equation}

The dynamics is stable/convergent if and only if:
\begin{equation}
[(1-\lafrac)+\lafrac(1-\lr\lam)^{\TT}]^{2}< 1
\end{equation}
Let $a = ((1-\lr\lam)^{\TT})-1$. Then we have:
\begin{equation}
(1+a\lafrac)^{2}< 1\to -\frac{2}{\lafrac}< a< 0
\end{equation}
This gives us
\begin{equation}
 1-\frac{2}{\lafrac}< (1-\lr\lam)^{\TT} < 1
\end{equation}
For $\lafrac = 1$, we recover the familiar $\lr\lam< 2$ to prevent runaway growth. For even $\TT$, for $\lafrac <2$ the condition is the same; after that the critical learning rate
decreases with $\lafrac$. For odd $\TT$, the condition is the same for all $\lafrac<1$. For $\lafrac>1$ gain the stable regime decreases. There is always some stable regime for all
$\lafrac$.

This analysis suggests that for $\lafrac<1$, the stability of the inner SGD sets the stability of the overall dynamics. For larger $\lafrac$, the training becomes harder to
stabilize (requiring a smaller inner learning rate $\lr$), but there is always a convergent
regime.




\subsection{\ladiloco slows convergence rates with isotropic data and fixed total batch size}

\label{app:slowloco_proof}

Here we prove the following theorem:
\slowloco*

\begin{proof}
From Equation \ref{eq:diloco_pvec} that the dynamics of $\pvec$ can be written as:
\begin{equation}
\pvec_{t+1} = \m{F}(\lr,\lafrac,\R,\B_{\rm tot})\pvec_{t}
\end{equation}
Our goal will be to show that $\m{F}$ has eigenvectors that are increasing in $\R$ when
$\lr$, $\lafrac$, and $\B_{\rm tot}$ are fixed. Since $\m{F}$ is PSD, this also corresponds to slower convergence to $0$ loss as $\R$ is varied.
We can write:
\begin{equation}
\m{F}(\lr,\lafrac,\R,\B_{\rm tot}) = \m{Q}(\lr,\lafrac)+\sum_{s=1}^{\TT}a_{s}(\lr,\lafrac, \R,\B_{\rm tot})\tl{\mmat}_{s}(\lr)
\end{equation}
where $\m{Q}$ and $\tl{\mmat}$ are independent of $\R$ and $\B_{\rm tot}$. We will first show
that each $\tl{\mmat}_{s}(\lr)$ is PSD, and that $a_{s}$ is increasing in $\R$.

By definition, each $\tl{\mmat}_{s}(\lr)$ can be written as a sum alternating products
of matrices of the forms $\lmat$ and $\lmat\m{1}\m{1}^{\tpose}\lmat$. Since the data are isotropic, $\lmat$ is proportional to the identity
and commutes with $\lmat\m{1}\m{1}^{\tpose}\lmat$. Therefore the $\tl{\mmat}_{s}(\lr)$ are sums products of commuting PSD matrices, which are PSD, so $\tl{\mmat}_{s}(\lr)$ are also PSD.

Recall that in this setting,
\begin{equation}
a_{s}(\lr,\lafrac, \R,\B_{\rm tot}) = \frac{\R^{s-1}}{\B_{tot}^{s}}\lafrac^{2}\lr^{2s}
\end{equation}
Since the power series starts at $s = 1$, for all valid indices for any $0<\R_{1}<\R_{2}$,
we have:
\begin{equation}
a_{s}(\lr,\lafrac, \R_{2},\B_{\rm tot}) \geq a_{s}(\lr,\lafrac, \R_{1},\B_{\rm tot})
\end{equation}
This means that we have:
\begin{equation}
\m{F}(\lr,\lafrac,\R_{2},\B_{\rm tot})-\m{F}(\lr,\lafrac,\R_{1},\B_{\rm tot}) = \sum_{s=1}^{\TT}c_{s}\tl{\mmat}_{s}(\lr)
\end{equation}
for coefficients $c_{s}\geq 0$. Therefore, we can write:
\begin{equation}
\m{F}(\lr,\lafrac,\R_{2},\B_{\rm tot}) = \m{F}(\lr,\lafrac,\R_{1},\B_{\rm tot})+\m{U}
\end{equation}
for a PSD matrix $\m{U}$. By the properties of PSD matrices,
$\lam_{k,\R_{2}}>\lam_{k, \R_{1}}$ where $\lam_{k, \R}$ is the $k$th eigenvalue of
$\m{F}(\lr,\lafrac,\R,\B_{\rm tot})$, thus proving the theorem.
\end{proof}

This proof technique does not immediately work for non-isotropic $\lmat$ since in that case, the individual summands of the $\tl{\mmat}_{s}(\lr)$ are not PSD. However our numerical
results lead us to conjecture that
a weaker version of the theorem holds for more general distributions, perhaps
restricted to the maximum eigenvalue of the $\m{F}(\lr,\lafrac,\R,\B_{\rm tot})$.
We leave explorations of this direction to future work.

\section{Deterministic dynamics of \sla}

In this section we derive the deterministic dynamics of \sla in the linear setting, and
show how the dynamics are different from weight averaging and GPA.

\subsection{Full linear dynamics}

\label{app:sla_det_dyn_full}

In the deterministic setting of the linear regression problem (full batch size), we can write the \sla dynamics in function space as:
\begin{equation}
\begin{pmatrix}
\tz_{t+\TT} \\
\tk_{t+\TT} \\
\z_{t+\TT} \\
\kvec_{t+\TT}
\end{pmatrix} = 
\Tsyn \Tout \Tin \begin{pmatrix}
\tz_{t} \\
\tk_{t} \\
\z_{t} \\
\kvec_{t}
\end{pmatrix}
\end{equation}
Here $\Tin$ and $\Tout$ represent the inner and outer optimizers respectively, and $\Tsyn$ represents the ``syncing'' operation that prepares $\z$ and $\kvec$ for the next cycle.

The standard definition of \sla which only syncs $\z$ has the syncing operator:
\begin{equation}
\Tsyn = \begin{pmatrix}
0 & 0 & \Id & 0\\
0 & \Id & 0 & 0 \\
0 & 0 & \Id & 0 \\
0 & 0 & 0 & \Id
\end{pmatrix}
\end{equation}
while the $2$-D simplification where $\kvec_{t}$ is initialized to $0$ every cycle has the operator:
\begin{equation}
\Tsyn = 
\begin{pmatrix}
0 & 0 & \Id & 0\\
0 & 0 & 0 & 0 \\
0 & 0 & \Id & 0 \\
0 & 0 & 0 & \Id
\end{pmatrix}
\end{equation}
For EMA style momentum, the inner and outer optimizer are given by the operators
\begin{equation}
\Tin = \begin{pmatrix}
\Id-\lr(1-\bin)\frac{1}{D}\ntk & -\eta\bin\Id & 0 & 0\\
(1-\bin)\frac{1}{D}\ntk & \bin\Id & 0 & 0\\
0 & 0 & \Id & 0 \\
0 & 0 & 0 & \Id
\end{pmatrix}^{\TT},~\Tout = \begin{pmatrix}
\Id & 0 & 0 & 0\\
0 & \Id & 0 & 0 \\
\lafrac(1-\bout)\Id & 0 & \Id-\lafrac(1-\bout)\Id & -\lafrac\bout\Id \\
-(1-\bout)\Id & 0 & (1-\bout)\Id & \bout\Id \\
\end{pmatrix}
\end{equation}
while for Nesterov (in EMA units) we have
\begin{equation}
\Tin = \begin{pmatrix}
\Id-\lr(1-\bin^{2})\frac{1}{D}\ntk & -\eta\bin^{2}\Id & 0 & 0\\
(1-\bin)\frac{1}{D}\ntk & \bin\Id & 0 & 0\\
0 & 0 & \Id & 0 \\
0 & 0 & 0 & \Id
\end{pmatrix}^{\TT},~\Tout = \begin{pmatrix}
\Id & 0 & 0 & 0\\
0 & \Id & 0 & 0 \\
\lafrac(1-\bout^{2})\Id & 0 & \Id-\lafrac(1-\bout^{2})\Id & -\lafrac\bout^{2}\Id \\
-(1-\bout)\Id & 0 & (1-\bout)\Id & \bout\Id \\
\end{pmatrix}
\end{equation}

We can once again shift to the eigenbasis of $\frac{1}{\D}\ntk$. This reduces the dynamics to a $4$-dimensional system for each eigenmode. For EMA momentum, for example, we have:
\begin{equation}
\Tin = \begin{pmatrix}
1-(1-\bin)\lr\lam & -\eta\bin & 0 & 0\\
(1-\bin)\lam & \bin & 0 & 0\\
0 & 0 & 1 & 0 \\
0 & 0 & 0 & 1
\end{pmatrix}^{\TT},~\Tout = \begin{pmatrix}
1 & 0 & 0 & 0\\
0 & 1 & 0 & 0 \\
\lafrac(1-\bout) & 0 & 1-\lafrac(1-\bout) & -\lafrac\bout \\
-(1-\bout) & 0 & (1-\bout) & \bout \\
\end{pmatrix}
\end{equation}

The inner and outer optimizers have a block diagonal structure:
\begin{equation}
\Tin = \begin{pmatrix}
\mF & 0\\
0 & \Id
\end{pmatrix},~\Tout = \begin{pmatrix}
\Id & 0 \\
\mG & \mH
\end{pmatrix}, \Tout\Tin = \begin{pmatrix}
\mF & 0 \\
\mG\mF & \mH
\end{pmatrix}
\end{equation}
which for EMA momentum gives
\begin{equation}
\mF = \begin{pmatrix}
1-(1-\bin)\lr\lam & -\eta\bin \\
(1-\bin)\lam & \bin
\end{pmatrix}^{\TT},~\G = \begin{pmatrix}
\lafrac(1-\bout) & 0  \\
-(1-\bout) & 0 
\end{pmatrix},~\H = \begin{pmatrix}
 1-\lafrac(1-\bout) & -\lafrac\bout \\
 (1-\bout) & \bout
\end{pmatrix}
\end{equation}
The $\mF$ matrix is of particular interest, as it defines the ``effective spectrum'' seen by the outer optimizer in the linear regression setting.

\subsection{Momentum with weight averaging}

\label{app:weight_ema_vs_sla}

Our analysis of \sla in the deterministic linear regression setting begs the immediate question: are there algorithms which don't have the two-phase
structure, but have similar behavior after $\TT$ steps? While this question is still generally open, we can answer in the negative for the most basic
idea: pairing the dynamics of SGD momentum with an exponential moving average of the parameters.

Consider the deterministic setting of the linear regression problem.
In the diagonalized basis,
given an eval point $\zema$ which is an EMA of $\zz$ with coefficient $\bema$ (where
$\bema = 0$ corresponds to normal evaluation at the latest parameter value), the update
rule for the $3$-dimensional system becomes:
\begin{equation}
\begin{pmatrix}
\zz_{t+1}\\
\kk_{t+1} \\
\zema_{t+1}
\end{pmatrix} =
\begin{pmatrix}
1 & 0 & 0\\
0 & 1 & 0\\
1-\bema & 0 & \bema
\end{pmatrix}
\begin{pmatrix}
1-\lr(1-\bone)\lam & -\lr\bone & 0\\
(1-\bone)\lam & \bone & 0 \\
0 & 0 & 1
\end{pmatrix}\begin{pmatrix}
\zz_{t}\\
\kk_{t} \\
\zema_{t}
\end{pmatrix}
\end{equation}
Evaluating we have:
\begin{equation}
\begin{pmatrix}
\zz_{t+1}\\
\kk_{t+1} \\
\zema_{t+1}
\end{pmatrix} =
\begin{pmatrix}
1-\lr(1-\bone)\lam & -\lr\bone & 0\\
(1-\bone)\lam & \bone & 0 \\
(1-\bema)[1-\lr(1-\bone)] & -(1-\bema)\lr\bone & \bema
\end{pmatrix}\begin{pmatrix}
\zz_{t}\\
\kk_{t} \\
\zema_{t}
\end{pmatrix}
\end{equation}
Since $\zema$ depends on $\zz$ and $\kk$, but does not causally affect them, two of the
eigenvalues of this system are the eigenvalues of the original system. This can be seen
directly by the fact that there are two left eigenvectors of the transition matrix
which lie in the $\zz$/$\kk$ subspace alone. These eigenvectors have the same
eigenvalues as the original problem. The remaining eigenvalue has eigenvalue $\bema$,
since the trace of the new system is the trace of the old system plus $\bema$.

This means that EMA of the evaluation point does not appreciably change the
eigenstructure of the problem in the same way that SLA does via the outer momentum
applied to the ``effective'' problem.

\subsection{GPA in linear regression}

\label{app:gpa_vs_sla}

The GPA algorithm in \citet{defazio2025smoothing} (and indeed, the paper title) is partially inspired by coming up with and alternative of single-worker \diloco which
maintains its benefits, but doesn't require the very discrete structure of the two-phase optimization process.
The paper suggests that GPA can be thought of as a continuous relaxation of \diloco. Here we show that in the deterministic
setting, \sla and GPA have quite different dynamics.

The original GPA formulation can be defined by the following equations:
\begin{equation}
\yy_{t} = \muy \xx_{t}+(1-\muy)\zz_{t}
\end{equation}
\begin{equation}
\zz_{t+1} = \zz_{t}+\lr_{t}\dd(\yy_{t})
\end{equation}
\begin{equation}
\xx_{t+1} = \mux \xx_{t}+(1-\mux)\zz_{t+1}
\end{equation}
Here $\dd(\yy_{t})$ represents the ``update'' of a base optimizer - for example, the negative gradient in the case of SGD. The variable $\xx_{t}$ is used for evaluation.

To understand the dynamics, we first re-write everything so that we have equations for
$\yy_{t+1}$, $\zz_{t+1}$, and $\xx_{t+1}$ as functions of $\yy_{t}$, $\zz_{t}$, and
$\xx_{t}$ only. This gives us:
\begin{equation}
\yy_{t+1} =  \mux \xx_{t+1}+(1-\muy)\zz_{t+1} = \muy\mux \xx_{t}+\muy(1-\mux)\zz_{t+1}+(1-\muy)\zz_{t+1}
\end{equation}
\begin{equation}
\yy_{t+1} =   \mux\muy \xx_{t}+(1-\mux\muy)[\zz_{t}+\lr_{t}\dd(\yy_{t})]
\end{equation}

The corresponding $\xx$ and $\zz$ equations are simple to compute, and we end up with:
\begin{equation}
\begin{pmatrix}
\yy_{t+1} \\
\zz_{t+1} \\
\xx_{t+1}
\end{pmatrix} = 
\begin{pmatrix}
0 & 1-\mux\muy & \mux\muy \\
0 & 1 & 0 \\
0 & 1-\mux & \mux
\end{pmatrix}
\begin{pmatrix}
\yy_{t} \\
\zz_{t} \\
\xx_{t}
\end{pmatrix}+\lr_{t}\dd(\yy_{t})\begin{pmatrix}
1-\mux\muy \\
1 \\
1-\mux
\end{pmatrix}
\end{equation}
In the deterministic setting of the linear regression model, the optimization problem
decomposes over the eigenmodes and we have $\dd(\yy_{t}) = -\lam\yy_{t}$. This gives us:
\begin{equation}
\begin{pmatrix}
\yy_{t+1} \\
\zz_{t+1} \\
\xx_{t+1}
\end{pmatrix} = 
\begin{pmatrix}
-\lr_{t}\lam(1-\mux\muy ) & 1-\mux\muy & \mux\muy \\
-\lr_{t}\lam & 1 & 0 \\
-(1-\mux)\lr_{t}\lam & 1-\mux & \mux
\end{pmatrix}
\begin{pmatrix}
\yy_{t} \\
\zz_{t} \\
\xx_{t}
\end{pmatrix}
\end{equation}
This has very different properties from any of the \sla matrices we analyzed in
Section \ref{sec:sla}, particularly in how $\lam$, $\lr$, and the other hyperparameters fold in to the overall spectrum. There is no
clear ``effective problem'' dominating the eigenstructure like in \sla.

Another way we can see this is by changing variables. We define:
\begin{equation}
\hkk_{t}\equiv \frac{1}{\mux\muy} \left(-\yy_{t}+(1-\mux\muy)\zz_{t}+\mux\muy\xx_{t}\right)
\end{equation}
Then the dynamics of $\yy_{t}$ and $\hkk_{t}$ close and we have:
\begin{equation}
\hkk_{t+1} = \mux\hkk_{t}+(1-\mux)\lr_{t}\dd(\yy_{t})
\label{eq:gpa_mom_simple}
\end{equation}
\begin{equation}
\yy_{t+1} = \yy_{t}+\muy\hkk_{t+1}+(1-\muy)\lr_{t}\dd(\yy_{t})
\label{eq:gpa_y_simple}
\end{equation}
We can see that this is very similar to Nesterov momentum, with an additional hyperparameter that decouples the contribution
of the momentum and gradient to the parameter update. This also will have very
different eigenstructure from \sla with Nesterov outer; in particular, it misses the interaction between the outer momentum and
highly convergent eigenmodes seen in \sla. This suggests that the two algorithms are indeed qualitatively different.


\end{document}
